\documentclass{article}
\PassOptionsToPackage{numbers,sort&compress}{natbib}

\usepackage[preprint]{neurips_2026}
\usepackage{graphicx}
\usepackage[utf8]{inputenc} % allow utf-8 input
\usepackage[T1]{fontenc}    % use 8-bit T1 fonts
\usepackage{hyperref}       % hyperlinks
\usepackage{url}            % simple URL typesetting
\usepackage{booktabs}       % professional-quality tables
\usepackage{amsfonts}       % blackboard math symbols
\usepackage{nicefrac}       % compact symbols for 1/2, etc.
\usepackage{microtype}      % microtypography
\usepackage{xcolor}         % colors
\usepackage{amsmath}
\usepackage[most]{tcolorbox}
\usepackage[dvipsnames]{xcolor}
\usepackage{multirow}
\usepackage{wrapfig}
\usepackage{subcaption}
\usepackage[numbers,sort&compress]{natbib}
\usepackage{amssymb}
\usepackage{amsmath}
\usepackage{amsthm}
\newtheorem{proposition}{Proposition}
\newtheorem{assumption}{Assumption}

\usepackage{enumitem}

\setlist[itemize]{noitemsep,leftmargin=*,topsep=0pt}
\setlist[enumerate]{noitemsep,leftmargin=*,topsep=0pt}

\title{Do Self-Evolving Agents Forget? Capability Degradation and Preservation in Lifelong LLM Agent Adaptation}

\author{
\textbf{Ye Yu\textsuperscript{1}}, 
\textbf{Xiaopeng Yuan\textsuperscript{1}}, 
\textbf{Haibo Jin\textsuperscript{1}}, 
\textbf{Heming Liu\textsuperscript{1}}, 
\textbf{Yaoning Yu\textsuperscript{1}}, 
\textbf{Haohan Wang\textsuperscript{1}\thanks{Corresponding author: haohanw@illinois.edu}}
\\
\textsuperscript{1} University of Illinois Urbana-Champaign, IL, USA
}

\begin{document}

\maketitle

\begin{abstract}
Recent advances in LLM agents enable systems that autonomously refine workflows, accumulate reusable skills, self-train their underlying models, and maintain persistent memory. However, we show that such self-evolution is often non-monotonic: adapting to new task distributions can progressively degrade previously acquired capabilities across all major evolution channels.

We identify this phenomenon as \emph{capability erosion under self-evolution} and show that it consistently emerges across workflow, skill, model, and memory evolution. To mitigate this issue, we propose \emph{Capability-Preserving Evolution} (CPE), a general stabilization principle that constrains destructive capability drift during continual adaptation. Across all four evolution dimensions, CPE consistently improves retained capability stability while preserving adaptation performance. For example, in workflow evolution, CPE improves retained simple-task performance from 41.8\% to 52.8\% under GPT-5.1 optimization while simultaneously achieving stronger complex-task adaptation.

Our findings suggest that stable long-horizon self-evolving agents require not only acquiring new capabilities, but also explicitly preserving previously learned ones during continual adaptation.
\end{abstract}

\section{Introduction}
\label{sec:intro}
LLM agents are rapidly shifting from static, manually engineered systems toward self-evolving entities that continuously refine themselves through interaction with their environments~\citep{gao2026surveyselfevolvingagentswhat}.
Recent frameworks enable agents to autonomously optimize their reasoning workflows~\cite{hu2025automateddesignagenticsystems, zhang2025aflowautomatingagenticworkflow, wang2025evoagentxautomatedframeworkevolving, zhang2025evoflowevolvingdiverseagentic, liu2026sewselfevolvingagenticworkflows}, construct reusable skills and tools~\citep{nguyen2025dynasaurlargelanguageagents, zhang2026memskilllearningevolvingmemory, wang2023voyageropenendedembodiedagent, zhao2024expelllmagentsexperiential, acikgoz2026toolr0selfevolvingllmagents, chen2026skillcraftllmagentslearn}, update their underlying model parameters~\citep{zelikman2022starbootstrappingreasoningreasoning, zeng2025bstarmonitoringbalancingexploration, tian2024selfimprovementllmsimaginationsearching}, and accumulate persistent memory~\citep{suzgun2025dynamiccheatsheettesttimelearning,zeng2025bstarmonitoringbalancingexploration, wei2025evomemorybenchmarkingllmagent} without human intervention.
Together, these advances suggest a compelling long-term vision: agents that can continually expand their competence over their lifetime through continual self-directed adaptation.
 
Yet this long-term vision rests on an assumption that has received little systematic scrutiny: when an agent adapts to new tasks within its task scope, it retains competence on tasks it has already mastered.
Our empirical study indicates that this assumption does not hold in many cases.
Across the major channels through which self-evolving agents adapt--workflow, skill and tools, model, memory~\citep{gao2026surveyselfevolvingagentswhat}--acquiring new competencies frequently comes at the expense of previously mastered ones.
As agents' task distribution temporarily shifts from an old task distribution X toward a new task distribution Y, repeated self-updates gradually weaken the internal structures that previously supported successful behavior on earlier tasks, causing prior capabilities to decay as illustrated in Figure~\ref{fig:method_overview}. 
We term this phenomenon \emph{capability erosion under self-evolution}.

\begin{figure*}[htbp]
    \centering
    \includegraphics[width=\linewidth]{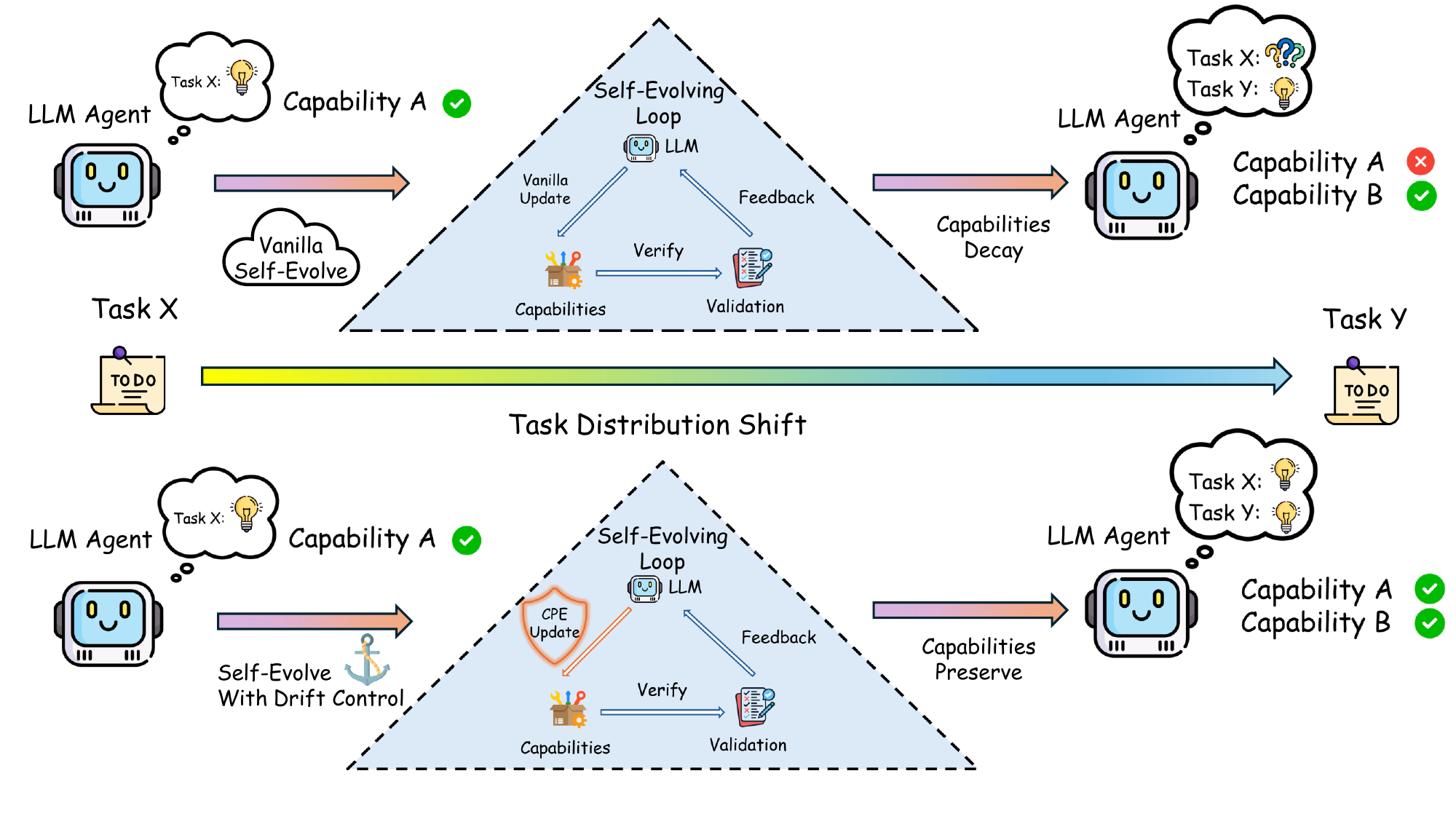}
\vspace{-25pt}
\caption{Self-evolution under task distribution shift. Under unconstrained self-evolution, repeated adaptation toward new tasks progressively erodes capability structures that supported previously mastered tasks, causing prior capability loss. Capability-Preserving Evolution instead constrains self-updates to acquire new capabilities while preserving previously retained ones.}
\label{fig:method_overview}
\vspace{-20pt}
\end{figure*}

Capability erosion cannot be characterized as a single drop in held-out accuracy. It manifests structurally across three dimensions:
\begin{enumerate}
    \item Retrospective capability decay. Performance on previously mastered tasks degrades after fine-tuning on new task distributions.
    \item Behavioral policy drift. Workflows become longer and more complex, introducing unnecessary execution steps that introduce noise.
    \item Generalization erosion. Adaptation to the evolved distribution narrows the effective reasoning space and interferes with previously acquired capabilities during skill or memory evolution.
\end{enumerate}
Taken together, these modes reveal that unconstrained self-evolution does not accumulate capability monotonically, but instead introduces continual drift away from previously reliable behavior. Adaptation to new tasks often trades off with the retention of prior capabilities.
 
A single mechanistic account underlies this phenomenon across all four evolution dimensions. 
Although workflow, tool, model, and memory evolution appear to be heterogeneous, all four dimensions involve repeatedly rewriting a mutable repository to accommodate new tasks. Each update optimized solely for a new task distribution introduces \emph{destructive interference}: modifications to internal repository structures that previously supported successful behavior on earlier tasks are degraded or overwritten.
Capability erosion is therefore not an artifact of any particular adaptation mechanism, but a structural consequence of unconstrained continual self-modification~\citep{Kirkpatrick_2017, wang2024comprehensivesurveycontinuallearning, marcus2025swebenchclcontinuallearning, fountas2025suresurprisedrivenprioritised, chen2024analyzingreducingcatastrophicforgetting, zhang2026mechanisticanalysiscatastrophicforgetting, luo2024empiricalstudycatastrophicforgetting}.

This diagnosis points toward a shared mitigation principle: self-evolution updates should acquire new competencies while minimizing destructive disruption to capability structures already encoded in the repository. In other words, the objective is not merely to move the agent toward the new task distribution Y, but to do so while preserving the capability support required by prior tasks, as depicted in the lower pipeline of Figure~\ref{fig:method_overview}. We term this \emph{Capability-Preserving Evolution} and propose it as a general design principle for stable long-horizon agent adaptation. To this end, we analyze how capability erosion manifests across all four evolution dimensions and, from this analysis, derive dimension-specific regularization strategies that instantiate capability-preserving evolution in each setting. Across workflow, skill and tool, model, and memory evolution, these strategies substantially mitigate observed erosion, demonstrating that capability-preserving evolution constitutes a broadly applicable stabilization mechanism for self-evolving agents.
 
Our contributions are threefold:
\begin{itemize}

\item We identify capability erosion as a fundamental failure mode of self-evolving LLM agents, where continual adaptation degrades previously acquired capabilities across workflow, skill, model, and memory evolution.

\item We provide a unified interference perspective on capability erosion, together with a local analysis showing how unconstrained self-evolution overwrites prior capability-supporting structures.

\item We propose Capability-Preserving Evolution (CPE), a general stabilization principle for self-evolving agents that constrains destructive capability drift during continual adaptation while preserving adaptation performance.

\end{itemize}

\section{Capability Erosion under Distributional Self-Evolution}
\label{sec:problem_formulation}
\subsection{Sequential Self-Evolution}

We model self-evolution as a sequential adaptation process over task distributions.
Let \(D_1,D_2,\ldots,D_T\) denote the distributions encountered by an agent over its lifetime.
At stage \(t\), the agent is represented by a capability state \(R_t \in \mathcal{R}\), which may correspond to an executable workflow, a skill repository, trainable model parameters, or a memory store. Self-evolution updates \(R_t\) to improve performance on newly encountered task distributions while implicitly affecting performance on previously learned ones.

\textbf{Notation.}
A sampled task instance is denoted by \(x \sim D_t\).
The instantaneous loss on \(x\) under capability state \(R\) is written as \(\ell(R;x)\).
The expected stage loss is
\begin{equation}
    \mathcal{L}_t(R)
=
\mathbb{E}_{x\sim D_t}[\ell(R;x)].
\end{equation}
Given a new distribution \(D_t\), naïve self-evolution updates the capability state by optimizing only the current-stage objective:
\begin{equation}
    R_t^{\mathrm{naive}}
\in
\arg\min_{R\in\mathcal{R}}
\mathcal{L}_t(R).
\end{equation}
This abstraction intentionally hides the concrete representation of \(R_t\).
For workflow evolution, \(R_t\) is an executable policy graph or prompt program.
For skill/tool evolution, \(R_t\) is a bounded repository of reusable procedural entries.
For model evolution, \(R_t\) is the trainable parameter state.
For memory evolution, \(R_t\) is an external retrieved memory store.
Although the underlying representations differ, all four settings share the same learning dynamic: optimizing performance on the current distribution \(D_t\) can interfere with capabilities previously acquired on earlier distributions \(D_{<t}\).

\subsection{Capability Erosion}

Let \(D_{<t}=\{D_1,\ldots,D_{t-1}\}\) denote previously encountered distributions.
We define the retained old-distribution risk at stage \(t\) as
\begin{equation}
    \mathcal{L}_{<t}(R)
=
\sum_{i<t}\alpha_i \mathcal{L}_i(R),
\qquad
\alpha_i\ge 0,\quad \sum_{i<t}\alpha_i=1.
\end{equation}
The coefficients \(\alpha_i\) control the relative importance assigned to prior distributions and may be chosen uniformly in the simplest setting.

We say capability erosion occurs at stage \(t\) if adaptation toward the current distribution \(D_t\) increases retained risk on previously encountered distributions \(D_{<t}\):
\begin{equation}
    \mathcal{L}_{<t}(R_t) > \mathcal{L}_{<t}(R_{t-1}).
\end{equation}
The central source of erosion is distributional interference. 
If the minimizers of the current objective \(\mathcal{L}_t\) are misaligned with those of prior objectives \(\mathcal{L}_{<t}\), then optimizing only \(\mathcal{L}_t\) can move the capability state toward regions that increase retained old-distribution risk. 
When old and new distributions rely on overlapping capability support, such interference may be weak or even beneficial. Conversely, larger mismatch between the structures required by \(D_t\) and \(D_{<t}\) induces stronger capability erosion.

Importantly, this formulation is stated in terms of a general capability state \(R_t\) rather than only neural parameters. In self-evolving agents, adaptation may occur through workflow rewriting, skill replacement, memory updates, or parameter optimization, all of which modify a mutable capability repository. Despite these different implementations, the same interference principle applies: adaptation toward new distributions can disrupt structures that previously supported successful behavior.
\subsection{A Local Mechanism of Capability Erosion}

% The discussion above characterizes capability erosion at the level of task distributions and retained risk. To better understand how such erosion emerges during self-evolution updates, we now consider a local approximation in which the capability state admits a continuous surrogate representation. This analysis is not
% intended to fully characterize arbitrary agent repositories. Rather, it isolates the common
% interference geometry that arises whenever self-evolution admits a continuous
% local surrogate for the mutable capability state.

The discussion above characterizes capability erosion at the level of task distributions and retained risk. To study how such erosion emerges during self-evolution updates, we now consider a local approximation in which small changes to the capability state induce corresponding local changes in the retained and current-task objectives.

Let $\Delta = R-R_{t-1}$ denote a local update around the current capability
state. Suppose $R_{t-1}$ is locally well-adapted to the retained old
distributions, so that
\[
    \nabla L_{<t}(R_{t-1})=0,
\]
and let
\[
    H_{<t}=\nabla^2 L_{<t}(R_{t-1}) \succeq 0,
    \qquad
    g_t=\nabla L_t(R_{t-1}).
\]
Here, $H_{<t}$ captures directions that are locally important for retained
capabilities, while $g_t$ is the update direction preferred by the new
distribution.

\begin{proposition}[Curvature mechanism of capability erosion]
\label{prop:local-erosion}
Let $R_{t-1}$ be a local minimizer of $L_{<t}$, and suppose $L_{<t}$ is twice
differentiable around $R_{t-1}$ with Hessian $H_{<t}\succeq 0$. Consider a
naive current-task update
\[
    R_t^{\mathrm{naive}} = R_{t-1}-\eta g_t,
    \qquad
    g_t=\nabla L_t(R_{t-1}),
\]
where $\eta>0$. Then, for sufficiently small $\eta$,
\[
    L_{<t}(R_t^{\mathrm{naive}})-L_{<t}(R_{t-1})
    =
    \frac{\eta^2}{2}g_t^\top H_{<t}g_t
    +
    o(\eta^2).
\]
Consequently, naive adaptation increases retained old-distribution loss
whenever $g_t$ has nonzero projection onto a positive-curvature direction of
$H_{<t}$.
\end{proposition}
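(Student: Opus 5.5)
The plan is a second-order Taylor expansion of $L_{<t}$ around $R_{t-1}$ in the Peano-remainder form, evaluated along the ray $R_{t-1}-\eta g_t$. Since $L_{<t}$ is twice differentiable at $R_{t-1}$, for any perturbation $\Delta\to 0$ we have
\[
L_{<t}(R_{t-1}+\Delta)-L_{<t}(R_{t-1})
=\nabla L_{<t}(R_{t-1})^\top\Delta+\tfrac12\Delta^\top H_{<t}\Delta+o(\|\Delta\|^2).
\]
First-order stationarity at the local minimizer, $\nabla L_{<t}(R_{t-1})=0$, removes the linear term. Substituting $\Delta=-\eta g_t$, with $g_t$ fixed (it depends only on $R_{t-1}$ and not on $\eta$), gives $\|\Delta\|^2=\eta^2\|g_t\|^2$. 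The quadratic term becomes $\tfrac{\eta^2}{2}g_t^\top H_{<t}g_t$, and the remainder is $o(\eta^2)$ as $\eta\to 0^+$. This is exactly the displayed identity; "for sufficiently small $\eta$" should be read as this asymptotic statement. If $g_t=0$, both sides vanish trivially.

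For the consequence, I would write the spectral decomposition $H_{<t}=\sum_j\lambda_j v_jv_j^\top$ with all $\lambda_j\ge 0$. Then
\[
g_t^\top H_{<t}g_t=\sum_j\lambda_j (v_j^\top g_t)^2,
\]
which is strictly positive exactly when some $v_j$ with $\lambda_j>0$ has $v_j^\top g_t\neq 0$. That is the stated condition of a nonzero projection onto a positive-curvature direction. Call this quantity $c>0$. The remainder satisfies $|o(\eta^2)|\le \tfrac{c}{4}\eta^2$ for all $\eta$ below some $\eta_0$. Hence the change in retained loss is at least $\tfrac{c}{4}\eta^2>0$ on $(0,\eta_0)$, which shows that retained loss strictly increases, i.e.\ capability erosion in the sense of the definition above.

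The main obstacle is regularity. The proposition assumes only twice differentiability at (or around) $R_{t-1}$, so I would use the Peano form of Taylor's theorem rather than a mean-value or integral remainder, which would need continuity of the Hessian. The multivariate Peano form holds under plain twice (Fréchet) differentiability at the point, so this step goes through. I would also note that the conclusion is local: it gives no control for large $\eta$. In the degenerate case $g_t\in\ker H_{<t}$, the leading term vanishes and the sign is decided by higher-order terms; the proposition correctly makes no claim there.
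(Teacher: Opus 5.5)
Your proof is correct and follows the paper's argument: a second-order Taylor expansion of $L_{<t}$ at $R_{t-1}$, the vanishing gradient at the local minimizer removing the linear term, substitution of $\Delta=-\eta g_t$, and positivity of $g_t^\top H_{<t} g_t$ giving the erosion claim. Your additions make precise what the paper leaves implicit: the Peano remainder under mere twice differentiability, reading ``positive-curvature direction'' as an eigenvector with $\lambda_j>0$, and explicitly absorbing the remainder into $\tfrac{c}{4}\eta^2$.
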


Proposition~\ref{prop:local-erosion} shows that capability erosion is not
controlled only by the size of the update. It is governed by whether the
new-task update direction overlaps with directions that are locally important
for retained capabilities. Thus, updates that improve the current distribution
can be destructive when they move the capability state along old-task-sensitive
directions. The proof is given in Appendix~\ref{app:local-proofs}.

\section{Capability-Preserving Evolution}
\label{sec:method}
\subsection{Regularized Self-Evolution Objective}

Capability-Preserving Evolution addresses this interference by adding an explicit retention regularizer.
At stage \(t\), instead of optimizing only \(\mathcal{L}_t\), CPE solves
\begin{equation}
    R_t^{\mathrm{CPE}}
\in
\arg\min_{R\in\mathcal{R}}
\mathcal{L}_t(R)
+
\lambda \Omega_t(R,R_{t-1}),
\label{eq:cpe_objective}
\end{equation}
where \(\Omega_t\) measures deviation from prior capability-supporting structure and \(\lambda>0\) controls the stability--plasticity tradeoff. Naïve self-evolution corresponds to the special case \(\lambda=0\), where adaptation is optimized without any explicit retention constraint.

The role of \(\Omega_t\) is not to prevent evolution.
Rather, it biases self-evolution toward low-interference solutions: among updates that improve the new distribution, prefer those that minimally disrupt structures needed by prior distributions.
Different evolution dimensions instantiate \(\Omega_t\) differently:
\begin{equation}
\Omega_t(R,R_{t-1})=
\begin{cases}
\Omega_{\mathrm{wf}} & \text{workflow retention regularization},\\
\Omega_{\mathrm{sk}} & \text{skill retention regularization},\\
\Omega_{\mathrm{md}} & \text{parameter retention regularization},\\
\Omega_{\mathrm{mem}} & \text{memory retention regularization}.
\end{cases}
\end{equation}

\subsection{Local Forgetting Control under CPE}

We now show how capability-preserving evolution controls the local erosion
mechanism above. In a local quadratic surrogate, CPE chooses an update
$\Delta_\lambda$ by solving
\[
    \Delta_\lambda
    =
    \arg\min_{\Delta}
    \left\{
        g_t^\top \Delta
        +
        \frac{1}{2}\Delta^\top H_t\Delta
        +
        \frac{\lambda}{2}\Delta^\top M_t\Delta
    \right\},
\]
where $H_t\succeq 0$ is the local curvature of the current-task loss and
$M_t\succ 0$ is a preservation metric. The matrix $M_t$ is a local quadratic
surrogate for the preservation regularizer $\Omega_t(R,R_{t-1})$.

\begin{assumption}[Capability-aligned preservation metric]
\label{assump:cap-aligned}
There exists a constant $c>0$ such that, locally around $R_{t-1}$,
\[
    H_{<t}\preceq cM_t .
\]
Equivalently, for every local update direction $\Delta$,
\[
    \Delta^\top H_{<t}\Delta
    \le
    c\Delta^\top M_t\Delta .
\]
\end{assumption}

Assumption~\ref{assump:cap-aligned} does not require the preservation metric to
equal the retained-task Hessian. It only requires the metric to upper-control
directions along which retained-task loss is locally sensitive. In model
evolution, this corresponds to Fisher-style parameter-importance penalties; in
workflow, skill, and memory evolution, it corresponds to a capability-aware
discrepancy over the mutable repository.

\begin{proposition}[CPE suppresses local old-task forgetting]
\label{prop:cpe-local-control}
Suppose $R_{t-1}$ is a local minimizer of $L_{<t}$, $L_{<t}$ is twice
differentiable around $R_{t-1}$ with Hessian $H_{<t}\succeq 0$, and
Assumption~\ref{assump:cap-aligned} holds. Let $H_t\succeq 0$, $M_t\succ 0$,
and $\lambda>0$. Then
\[
    \Delta_\lambda
    =
    -(H_t+\lambda M_t)^{-1}g_t,
\]
and the retained old-distribution degradation satisfies
\[
    L_{<t}(R_{t-1}+\Delta_\lambda)-L_{<t}(R_{t-1})
    \le
    \frac{c}{2\lambda^2}
    \|g_t\|_{M_t^{-1}}^2
    +
    o(\|\Delta_\lambda\|^2).
\]
Moreover, $\|\Delta_\lambda\|=O(\lambda^{-1})$ for large $\lambda$, so the
remainder is lower order than the leading $O(\lambda^{-2})$ term.
\end{proposition}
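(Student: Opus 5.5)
We prove the three claims in order: the closed form of $\Delta_\lambda$, the bound on retained degradation, and the $O(\lambda^{-1})$ size of the update.

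\textbf{Closed form.} The surrogate objective $q(\Delta)=g_t^\top\Delta+\tfrac12\Delta^\top(H_t+\lambda M_t)\Delta$ is strictly convex, because $H_t\succeq 0$, $M_t\succ 0$ and $\lambda>0$ give $H_t+\lambda M_t\succeq \lambda M_t\succ 0$. Its unique minimizer is therefore characterized by the first-order condition $g_t+(H_t+\lambda M_t)\Delta=0$, which yields $\Delta_\lambda=-(H_t+\lambda M_t)^{-1}g_t$.

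\textbf{Second-order expansion.} As in the proof of Proposition~\ref{prop:local-erosion}, we Taylor-expand $L_{<t}$ around $R_{t-1}$. Since $\nabla L_{<t}(R_{t-1})=0$ and $L_{<t}$ is twice differentiable,
\[
L_{<t}(R_{t-1}+\Delta_\lambda)-L_{<t}(R_{t-1})=\tfrac12\Delta_\lambda^\top H_{<t}\Delta_\lambda+o(\|\Delta_\lambda\|^2).
\]
Assumption~\ref{assump:cap-aligned} then bounds the quadratic term by $\tfrac c2\Delta_\lambda^\top M_t\Delta_\lambda$.

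\textbf{Bounding the $M_t$-norm of the update.} This is the key step. Set $A=H_t+\lambda M_t$, so that
\[
\Delta_\lambda^\top M_t\Delta_\lambda=g_t^\top A^{-1}M_tA^{-1}g_t .
\]
Since $A\succeq\lambda M_t\succ 0$ and matrix inversion is operator-monotone, $A^{-1}\preceq\lambda^{-1}M_t^{-1}$. To use this cleanly, whiten with $M_t^{1/2}$ and let
\[
B=M_t^{-1/2}AM_t^{-1/2}=\lambda I+M_t^{-1/2}H_tM_t^{-1/2}\succeq\lambda I .
\]
Then $M_t^{1/2}A^{-1}M_t^{1/2}=B^{-1}$, and setting $h=M_t^{-1/2}g_t$ gives
\[
g_t^\top A^{-1}M_tA^{-1}g_t=h^\top B^{-2}h .
\]
Because $B$ is symmetric with all eigenvalues at least $\lambda$, we have $B^{-2}\preceq\lambda^{-2}I$. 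Hence
\[
h^\top B^{-2}h\le\lambda^{-2}\|h\|^2=\lambda^{-2}g_t^\top M_t^{-1}g_t=\lambda^{-2}\|g_t\|_{M_t^{-1}}^2 .
\]
Combining this with the second-order expansion gives the claimed bound
\[
L_{<t}(R_{t-1}+\Delta_\lambda)-L_{<t}(R_{t-1})\le\frac{c}{2\lambda^2}\|g_t\|_{M_t^{-1}}^2+o(\|\Delta_\lambda\|^2).
\]

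\textbf{Size of the update.} Write $\mu_{\min}>0$ for the smallest eigenvalue of $M_t$. Then $A\succeq\lambda M_t\succeq\lambda\mu_{\min}I$, so
\[
\|\Delta_\lambda\|\le\|A^{-1}\|\,\|g_t\|\le\frac{\|g_t\|}{\lambda\mu_{\min}}=O(\lambda^{-1}).
\]
Consequently $o(\|\Delta_\lambda\|^2)=o(\lambda^{-2})$, which is lower order than the leading $O(\lambda^{-2})$ term. This also makes the Taylor expansion legitimate: it is only valid locally, and the bound shows that $\Delta_\lambda\to 0$ as $\lambda\to\infty$, so for large $\lambda$ the point $R_{t-1}+\Delta_\lambda$ lies in the neighborhood where the expansion and Assumption~\ref{assump:cap-aligned} hold.

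The only genuinely nontrivial step is the $\lambda^{-2}$ bound on $\Delta_\lambda^\top M_t\Delta_\lambda$. It is handled by the whitening argument above, which avoids any need for $H_t$ and $M_t$ to commute.
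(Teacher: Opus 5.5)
Your proposal is correct and follows essentially the same argument as the paper: the first-order condition gives the closed form, and whitening by $M_t^{-1/2}$ reduces $\Delta_\lambda^\top M_t\Delta_\lambda$ to $\tilde g^\top(\lambda I+M_t^{-1/2}H_tM_t^{-1/2})^{-2}\tilde g\le\lambda^{-2}\|g_t\|_{M_t^{-1}}^2$, after which Assumption~\ref{assump:cap-aligned} and the zero-gradient Taylor expansion finish the bound. The only differences are cosmetic: you bound $\|\Delta_\lambda\|$ directly via $H_t+\lambda M_t\succeq\lambda\mu_{\min}I$ rather than through the whitened resolvent, and you state explicitly that $\Delta_\lambda\to 0$ is what keeps the local Taylor expansion valid, which the paper leaves implicit.
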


Proposition~\ref{prop:cpe-local-control} gives a local stability interpretation
of CPE. Naive adaptation can increase old-task loss by moving along
old-task-sensitive curvature directions. A capability-aligned preservation
metric penalizes precisely these directions, so increasing $\lambda$ suppresses
old-task degradation while also limiting the size of the new-task update. This
is the local stability--plasticity tradeoff that motivates the
dimension-specific CPE regularizers below. The proof is given in
Appendix~\ref{app:local-proofs}.

\subsection{Channel-Specific Instantiations}
The theory above gives a shared template.
Each empirical setting instantiates the same objective. 
% with a different preservation regularizer.

\textbf{Workflow evolution.}
Here \(R_t\) is an executable workflow. The regularizer \(\Omega_{\mathrm{wf}}\) discourages deviations from previously successful execution behavior during adaptation to new task distributions.

\textbf{Skill/tool evolution.}
Here \(R_t\) is a bounded skill repository. The regularizer \(\Omega_{\mathrm{sk}}\) discourages unnecessary removal of previously useful skills during sequential adaptation.

\textbf{Model evolution.}
Here \(R_t=\theta_t\) denotes trainable model parameters or adapter weights.
CPE uses an importance-weighted quadratic penalty:
\begin{equation}
    \Omega_{\mathrm{md}}(\theta,\theta_{t-1})
=
\frac{1}{2}\sum_i F_i(\theta_i-\theta_{t-1,i})^2,
\end{equation}
where \(F_i\) estimates the importance of parameter \(i\) to prior-stage performance.
This regularizer discourages parameter movement along directions that are important for previously learned distributions, reducing interference between old and new task adaptation.

\textbf{Memory evolution.}
Here \(R_t\) is a persistent memory bank. The regularizer \(\Omega_{\mathrm{mem}}\) discourages deletion or suppression of memories useful for previously encountered distributions.
\section{Empirical Instantiations of Capability-Preserving Evolution}
\label{sec:experiements}
We now instantiate Capability-Preserving Evolution across four major self-evolution dimensions:
workflow evolution, skill/tool evolution, model evolution, and memory evolution.
Although the underlying repositories differ across these settings, all four exhibit the same
stability–plasticity tension: adaptation toward new task distributions can interfere with
previously retained capabilities.
For each dimension, we implement a dimension-specific preservation mechanism while
keeping the underlying self-evolution pipeline unchanged, allowing direct comparison
between unconstrained and capability-preserving evolution.

\subsection{CPE for Workflow Evolution}

\begin{wraptable}{r}{0.38\linewidth}
\vspace{-8pt}
\centering
\footnotesize
\resizebox{1\linewidth}{!}{%
\begin{tabular}{l|cc|cc}
\toprule
\multirow{2}{*}{\textbf{Env.}}
& \multicolumn{2}{c|}{\textbf{GPT-5.1}}
& \multicolumn{2}{c}{\textbf{GPT-5 nano}} \\
\cmidrule(lr){2-3} \cmidrule(lr){4-5}
& \textbf{Van.} & \textbf{CPE}
& \textbf{Van.} & \textbf{CPE} \\
\midrule

\multicolumn{5}{c}{\textit{Simple}} \\
\midrule
Airline & 57.1 & 64.3 & 57.1 & 71.4 \\
Retail  & 43.3 & 56.7 & 50.0 & 50.0 \\
Telecom & 25.0 & 37.5 & 33.3 & 45.8 \\
\midrule
Avg.    & 41.8 & 52.8 & 46.8 & 55.7 \\

\midrule
\multicolumn{5}{c}{\textit{Complex}} \\
\midrule
Airline & 37.5 & 37.5 & 20.8 & 29.2 \\
Retail  & 20.3 & 45.3 & 40.6 & 43.8 \\
Telecom & 14.0 & 17.5 & 7.0 & 7.0 \\
\midrule
Avg.    & 23.9 & 33.4 & 22.8 & 26.7 \\

\bottomrule
\end{tabular}}

\vspace{-4pt}
\caption{
Workflow self-evolution comparison between vanilla self-evolve and CPE-style evolution (in \%). Scores are reported after self-evolving from a shared simple-task seed workflow on the complex-task subset.
}
\label{tab:workflow_backbone_compare}
\vspace{-20pt}
\end{wraptable}

\begin{figure*}[t]
    \centering
    
    \begin{subfigure}[t]{0.32\linewidth}
        \centering
        \includegraphics[width=\linewidth]{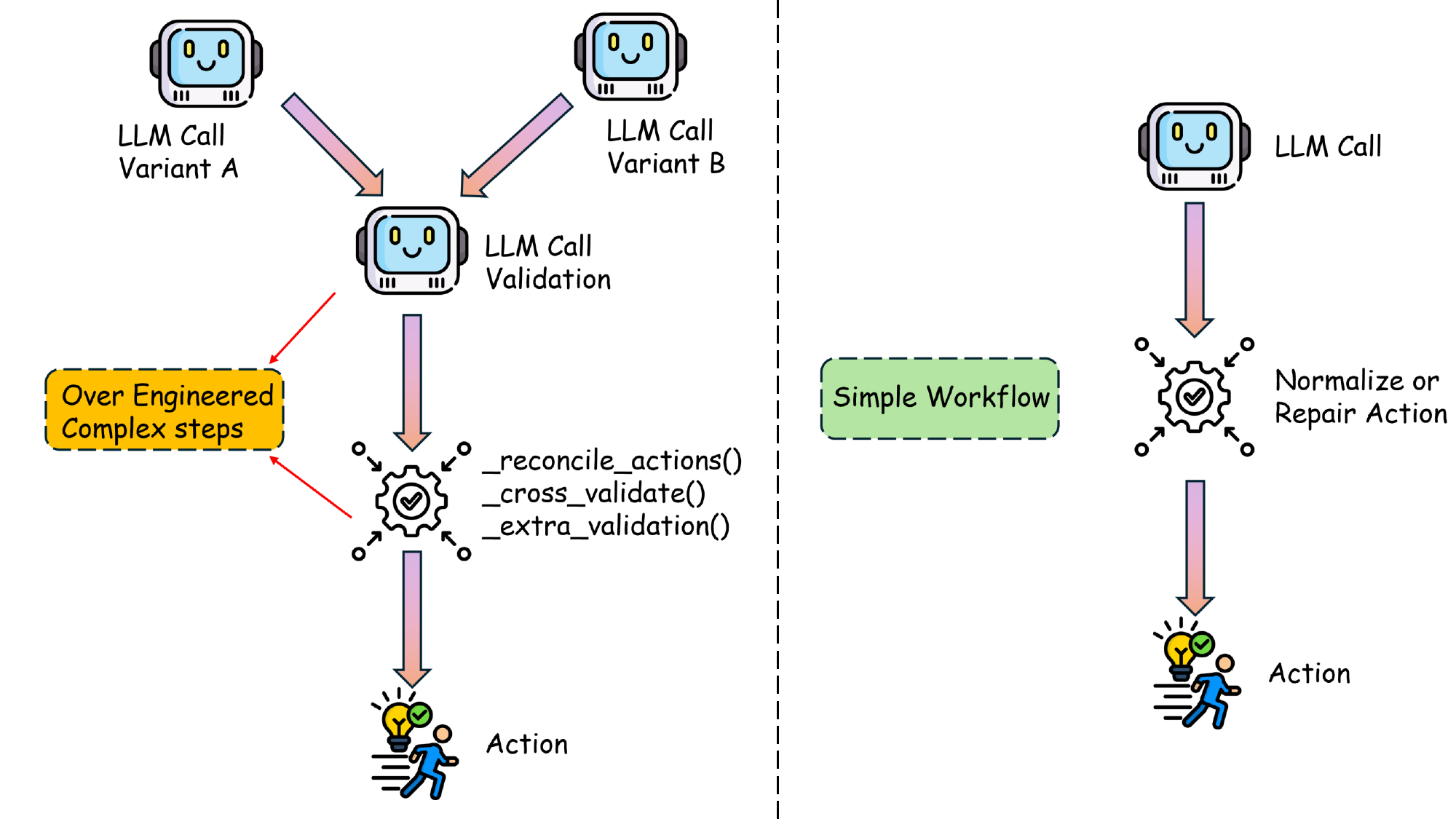}
        \caption{Final workflow comparison after complex-task self-evolution. 
Vanilla EvoAgentX introduces overly complex workflow, while CPE preserves a more compact execution structure.}
        \label{fig:workflow_comparison}
    \end{subfigure}
    \hfill
    \begin{subfigure}[t]{0.32\linewidth}
        \centering
        \includegraphics[width=\linewidth]{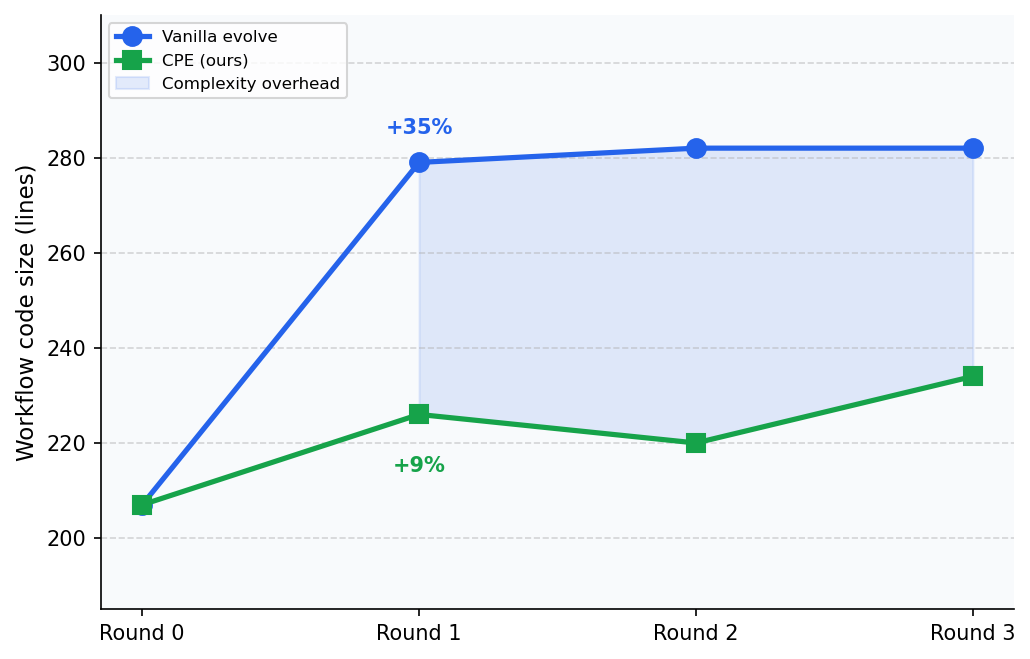}
        \caption{Workflow complexity measured during self-evolution on complex Airline task subset. Vanilla EvoAgentX  accumulates workflow complexity, while CPE substantially slows long-horizon structural growth.}
        \label{fig:workflow_complexity}
    \end{subfigure}
    \hfill
    \begin{subfigure}[t]{0.32\linewidth}
        \centering
        \includegraphics[width=\linewidth]{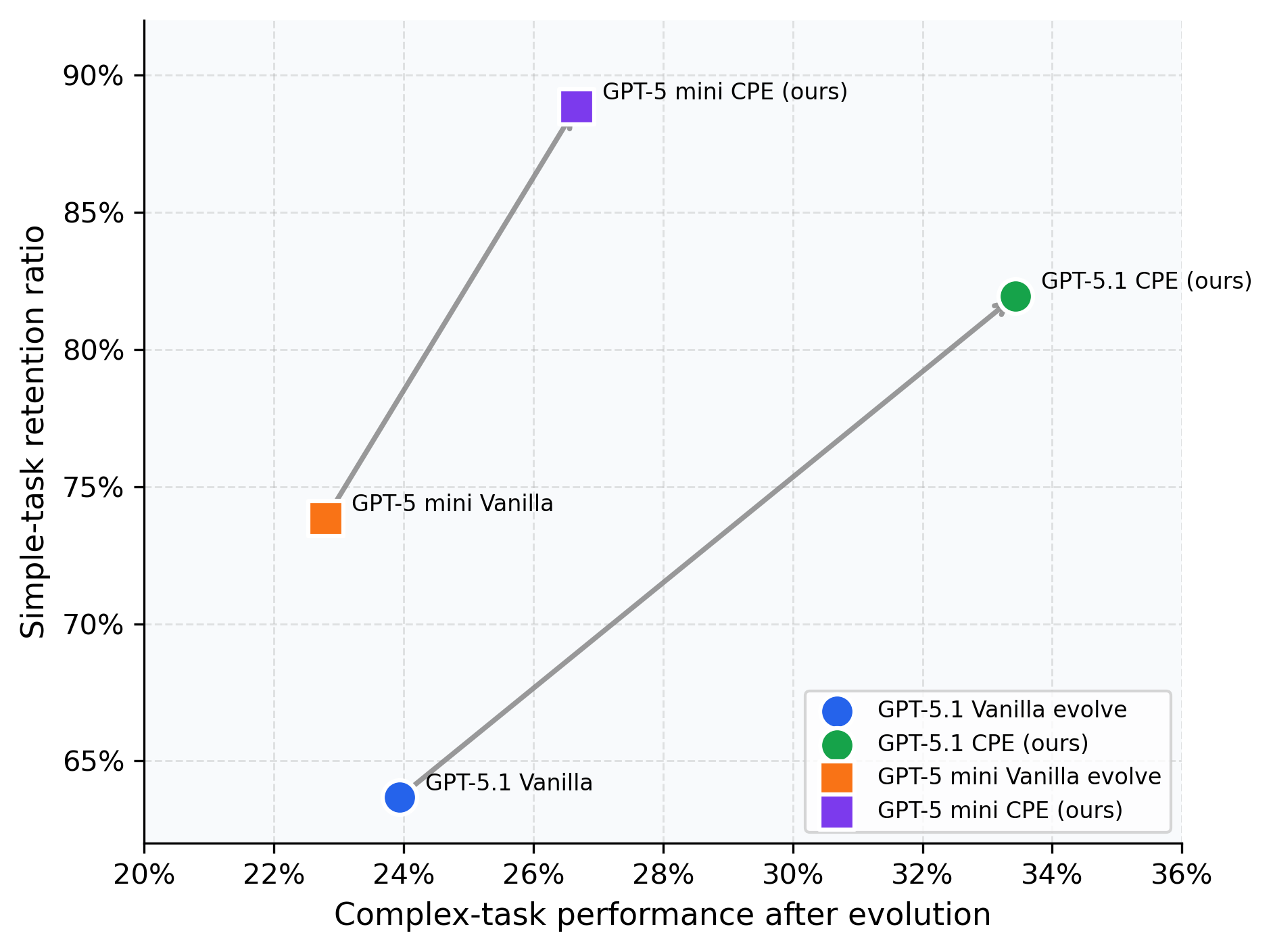}
        \caption{Post-evolution trade-off between retained simple-task performance and complex-task performance. Each point denotes the final evolved workflow after evolution on complex task subset. }
        \label{fig:workflow_retention}
    \end{subfigure}
    
\caption{Behavioral policy drift during complex-task workflow self-evolution. 
Starting from the same simple-task seed workflow, vanilla EvoAgentX and CPE undergo parallel self-evolution on the complex-task subset. 
Vanilla evolution accumulates structural detours, producing a more complex final executable workflow. }
\label{fig:workflow_combined}
\end{figure*}

% In workflow self-evolution, the mutable repository
% $R_t^{\mathrm{wf}}$ is the current executable workflow,
% including both operator orchestration and prompt specifications.
% Naïve workflow mutation repeatedly rewrites this artifact
% to improve validation performance on new tasks.
% Over long-horizon evolution, however, such unconstrained rewrites
% often introduce prompt drift and structural policy drift,
% causing later workflows to deviate from the compact execution
% semantics encoded in earlier successful workflows.

% To mitigate this issue, CPE extracts anchor behavioral signatures
% from the seed workflow, including core task objectives,
% output constraints, and critical failure-avoidance behaviors.
% These anchors are injected into each optimization round as
% stability references that discourage disruptive
% workflow rewrites while still permitting adaptation to harder tasks.

\textbf{Setup.}
In workflow self-evolution, the mutable repository
$R_t^{\mathrm{wf}}$ is the current executable workflow,
including both operator orchestration and prompt specifications.
Repeated unconstrained workflow rewriting often introduces
prompt and structural policy drift, causing later workflows
to deviate from the compact execution semantics encoded in
earlier successful behaviors.

We study this setting using EvoAgentX~\citep{wang2025evoagentxautomatedframeworkevolving}
on $\tau^2$-Bench~\citep{barres2025tau2benchevaluatingconversationalagents}
with GPT-5.1 and GPT-5 nano~\citep{singh2026openaigpt5card, openai2024gpt4technicalreport} as optimizer backbones.
For each domain, we first construct a shared seed workflow
on the simple-task subset, then launch parallel vanilla and
CPE self-evolution branches on the complex-task subset.
In both settings, failed trajectories are iteratively fed back
to an optimizer LLM that rewrites the workflow to improve
future execution success.
Vanilla EvoAgentX performs unconstrained workflow mutation,
whereas CPE injects anchor behavioral signatures extracted
from the seed workflow, including core task objectives,
output constraints, and failure-avoidance behaviors,
to discourage disruptive rewrites while preserving the same
optimizer, feedback loop, and mutation budget.

After evolution, we evaluate the final workflows on both
simple and complex tasks to measure retained routine-task
robustness and adaptation to harder task distributions.
Additional implementation details are deferred to
Appendix~\ref{app:workflow_impl}.

\textbf{Observation and Analysis.}
Table~\ref{tab:workflow_backbone_compare} and Figure~\ref{fig:workflow_combined} show that, starting from the same seed workflow and optimization budget, vanilla EvoAgentX and CPE produce markedly different post-evolution policies. Although both adapt to complex tasks, vanilla EvoAgentX yields a more bloated workflow and larger degradation on previously mastered simple tasks.

As illustrated in Figure~\ref{fig:workflow_comparison}, vanilla evolution often inserts extra validation operators and structural detours. This pattern is locally reasonable: in long-horizon complex tasks, moderate validation can prevent malformed intermediate outputs from derailing execution. However, when such validation-heavy repairs are repeatedly accepted without compactness constraints, the workflow gradually becomes over-defensive. Slightly nonstandard but semantically valid actions may be rejected, triggering conservative fallback behavior or prematurely terminating recoverable episodes. This structural drift is reflected in Figure~\ref{fig:workflow_complexity}, where vanilla EvoAgentX accumulates much longer workflow code, while CPE substantially slows this expansion. Figure~\ref{fig:workflow_retention} further shows that CPE achieves a better retention--adaptation trade-off, preserving more simple-task performance while maintaining comparable or stronger complex-task gains. 

Overall, we find that unconstrained self-evolution repeatedly rewards locally beneficial repairs that improve difficult trajectories but gradually overwrite the compact execution semantics needed for stable routine-task behavior. CPE mitigates this failure by constraining evolution toward seed-consistent executable behavior.

\subsection{CPE for Skill/Tool Evolution}
\begin{figure*}[t]
    \centering
    \includegraphics[width=\linewidth]{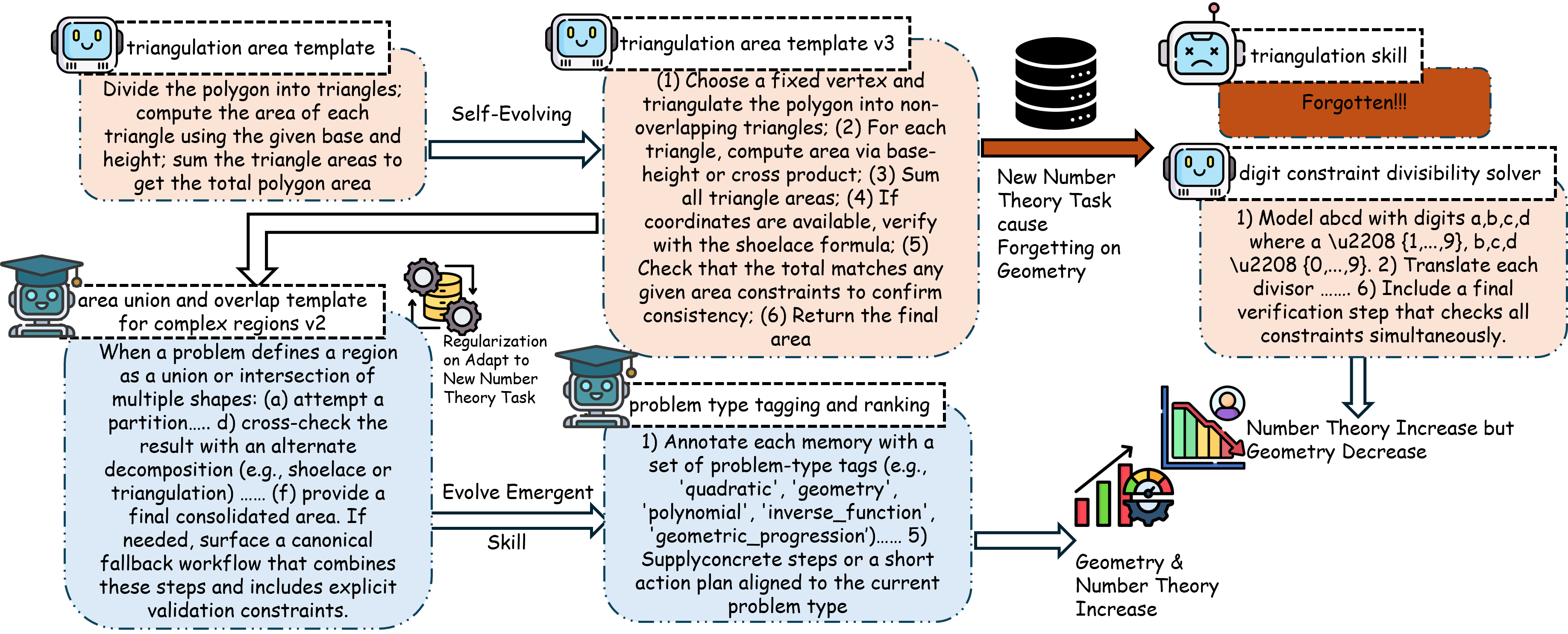}
    \vspace{-10pt}
\caption{Generalization erosion during skill self-evolution. As the task distribution shifts from Geometry to Number Theory, vanilla self-evolution progressively overwrites previously useful Geometry skills under bounded repository capacity. In contrast, CPE preserves prior procedural knowledge through capability-preserving skill consolidation, allowing old and new skills to coexist more stably.}
\label{fig:skill_evolve}

\vspace{-15pt}
\end{figure*}

\begin{figure*}[t]
    \centering
    
    \begin{subfigure}[t]{0.48\linewidth}
        \centering
        \includegraphics[width=\linewidth]{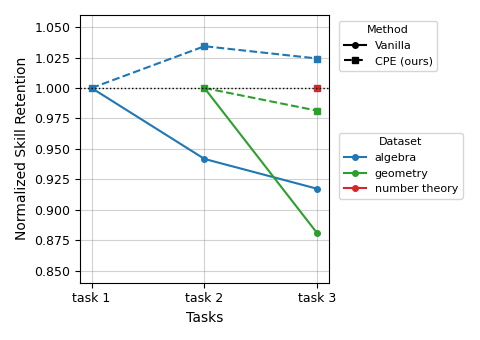}
        \caption{Normalized retained skill usage across sequential skill evolution stages. Vanilla self-evolution gradually suppresses previously learned  skills after the task distribution shifts, while CPE preserves broader prior-skill utilization through capability-preserving repository management.}
        \label{fig:skill_usage}
    \end{subfigure}
    \hfill
    \begin{subfigure}[t]{0.48\linewidth}
        \centering
        \includegraphics[width=\linewidth]{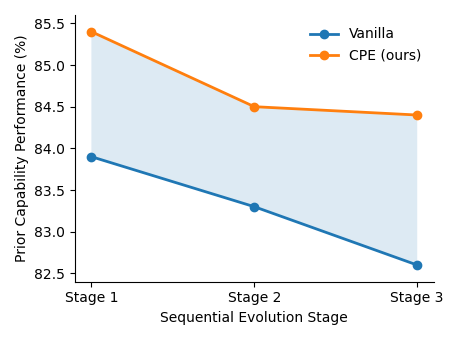}
        \caption{Sequential self-evolution performance on the Algebra evaluation set using GPT-4o mini. As the evolution trajectory shifts toward later domains, vanilla self-evolution exhibits continual degradation on earlier Algebra capability, whereas CPE substantially stabilizes retained performance throughout the evolution.}
        \label{fig:skill_evolve_performance_curve}
    \end{subfigure}
    
     \caption{Generalization erosion during skill self-evolution. Vanilla self-evolution and CPE undergo identical evolution trajectories. Vanilla evolution progressively overwrites previously useful procedural skills as the task distribution shifts, while CPE preserves historically reusable skills through capability-preserving repository compression and merge-based capacity management.}
    \label{fig:skill_combined}
\vspace{-10pt}
\end{figure*}

% In skill self-evolution, the mutable repository is a bounded-capacity skill bank that stores reusable procedural reasoning templates.
% As new hard cases emerge, the agent continually synthesizes new specialized skills to improve downstream performance.

% Under naïve self-evolution, however, the limited repository capacity creates a continual overwrite dynamic.
% As new specialized skills are added, previously useful general-purpose skills become less frequently retrieved and more likely to be evicted, causing the repository to drift toward recently optimized subdomains.

% CPE mitigates this issue through capability-preserving repository consolidation.
% Instead of directly overwriting old entries, semantically related skills are merged to release capacity, while historically reusable high-utility skills are preferentially preserved.\hbc{seems redundant here, since below will mention it again, just delete it?}

\textbf{Setup.}
We study skill self-evolution using a  MemSkill~\citep{zhang2026memskilllearningevolvingmemory}
style framework on MATH~\citep{hendrycks2021measuringmathematicalproblemsolving} with GPT-4o mini
and GPT-5 nano~\citep{openai2024gpt4technicalreport, singh2026openaigpt5card}
as the designer and executor backbones.
The mutable repository is a bounded-capacity skill bank that stores reusable procedural reasoning templates.
During each episode, the agent retrieves the most relevant skills from the current repository to guide downstream reasoning, while failed trajectories are accumulated and periodically analyzed by a designer LLM to synthesize new procedural skills.

Under vanilla self-evolution, accepted skills are directly added into the shared repository, and least-frequently-used entries are evicted once the capacity budget is exceeded.
As the task distribution shifts, newly synthesized specialized skills progressively dominate repository capacity, causing previously useful general-purpose skills to become under-utilized and eventually overwritten.

Our CPE variant preserves the same retrieval, execution, and skill-generation pipeline, but regularizes repository updates before admission.
Candidate skills are checked against preserved capability signatures, semantically related skills are merged to release capacity, and historically reusable high-utility skills are protected from direct deletion.
Both variants are evaluated under identical evolution budgets.
Implementation details are deferred to Appendix~\ref{app:tool_impl}.

\begin{wraptable}{r}{0.48\textwidth}
\vspace{-10pt}
\centering
\footnotesize
\setlength{\tabcolsep}{4pt}
\resizebox{1\linewidth}{!}{%
\begin{tabular}{l|cc|cc}
\toprule
\multirow{2}{*}{\textbf{Domain}}
& \multicolumn{2}{c|}{\textbf{GPT-5 nano}}
& \multicolumn{2}{c}{\textbf{GPT-4o mini}} \\
\cmidrule(lr){2-3} \cmidrule(lr){4-5}
& \textbf{Vanilla} & \textbf{CPE}
& \textbf{Vanilla} & \textbf{CPE} \\
\midrule

Algebra       & 84.3 & 85.9 & 82.6 & 84.4 \\
Geometry      & 58.9 & 59.3 & 50.8 & 51.1 \\
Number Theory & 85.4 & 87.8 & 73.5 & 76.1 \\

\bottomrule
\end{tabular}}
\vspace{-4pt}
\caption{Skill dimension self-evolution comparison between vanilla self-evolve and CPE-style evolution (in \%). Scores are reported after Algebra $\rightarrow$ Geometry $\rightarrow$ Number Theory evolution.}
\label{tab:tool_selfevolve}
\vspace{-12pt}
\end{wraptable}

\textbf{Observation and Analysis.}
Table~\ref{tab:tool_selfevolve}, Figure~\ref{fig:skill_evolve},  and Figure~\ref{fig:skill_combined} show that capability erosion in skill self-evolution mainly arises from repository overwrite under sequential task-distribution shift. As the task distribution shifts from Geometry to Number Theory, newly synthesized skills gradually dominate retrieval frequency and bounded repository capacity, causing previously useful Geometry skills to become under-utilized and eventually evicted.

CPE mitigates this issue through capability-preserving consolidation. Instead of directly overwriting old entries, semantically related skills are merged before new skills are admitted, allowing earlier capabilities to remain represented within the repository. This effect appears in both repository statistics and downstream performance. Figure~\ref{fig:skill_usage} shows that vanilla self-evolution progressively suppresses retained skill usage for earlier domains, while CPE preserves substantially higher prior-skill utilization across stages. Similarly, Figure~\ref{fig:skill_evolve_performance_curve} shows continual degradation on the Algebra evaluation set under vanilla evolution, whereas CPE maintains more stable retained capability throughout sequential evolution.

Overall, vanilla skill evolution progressively overwrites previously useful skills under bounded repository capacity, while CPE mitigates this erosion through capability-preserving consolidation.

\subsection{CPE for Model Evolution}
\begin{wrapfigure}{r}{0.4\linewidth}
    \vspace{-25pt}
    \centering
    \includegraphics[width=\linewidth]{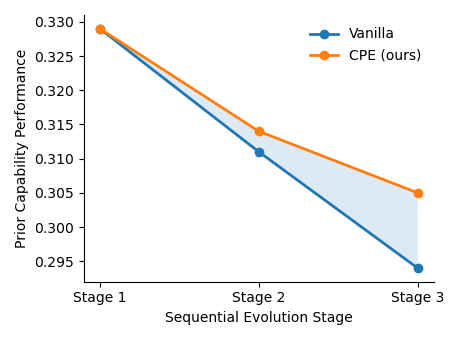}
    \vspace{-15pt}
    \caption{Sequential self-evolution performance on the Anatomy evaluation set using Qwen3-0.6B. 
%     Vanilla
% self-evolution exhibits continual degradation on earlier
% anatomy capability, whereas CPE substantially stabilizes
% retained performance throughout the evolution process.
}
    \label{fig:model_evolve_curve}
    \vspace{-15pt}
\end{wrapfigure}

\textbf{Setup.}
We study model self-evolution using STaR self-training~\citep{zelikman2022starbootstrappingreasoningreasoning}
on MedMCQA~\citep{pal2022medmcqalargescalemultisubject}
with Qwen3-0.6B~\citep{yang2025qwen3technicalreport}
and Llama3.2-3B~\citep{grattafiori2024llama3herdmodels}.
The mutable repository is the trainable parameter space of the evolving language model adapter.
At each stage, the model generates rationale-answer traces on a newly introduced medical domain,
filters correct trajectories, and performs LoRA fine-tuning on the resulting demonstrations,
producing a sequential self-training process across domains.

Under vanilla self-evolution, later-stage updates continually adapt the model toward newly encountered domains,
but often overwrite parameter regions that previously supported earlier reasoning behaviors,
producing classical catastrophic forgetting.

Our CPE variant preserves the same continual self-training pipeline,
but adds an Elastic Weight Consolidation style regularizer~\citep{Kirkpatrick_2017,zheng2026revisitingweightregularizationlowrank}.
After each stage, Fisher-based parameter importance weights are estimated to discourage destructive updates on parameters important to prior domains while still permitting adaptation on lower-importance directions. Both variants are evaluated under identical evolution order, optimization budget, and LoRA configurations.
Implementation details are deferred to Appendix~\ref{app:model_impl}.

\begin{wraptable}{r}{0.4\linewidth}
% \vspace{-12pt}
\centering
\footnotesize
\setlength{\tabcolsep}{4pt}
\resizebox{1\linewidth}{!}{%
\begin{tabular}{l|cc|cc}
\toprule
\multirow{2}{*}{\textbf{Domain}}
& \multicolumn{2}{c|}{\textbf{Qwen3-0.6B}}
& \multicolumn{2}{c}{\textbf{Llama3.2-3B}} \\
\cmidrule(lr){2-3} \cmidrule(lr){4-5}
& \textbf{Vanilla} & \textbf{CPE}
& \textbf{Vanilla} & \textbf{CPE} \\
\midrule
Anatomy      & 29.4 & 30.5 & 68.9 & 70.3 \\
Biochemistry & 37.2 & 38.4 & 79.4 & 79.5 \\
Dental       & 32.2 & 33.0 & 53.0 & 53.5 \\
\bottomrule
\end{tabular}}
\vspace{-5pt}
\caption{Model dimension self-evolution comparison between vanilla self-evolve and CPE-style evolution (in \%). Scores are reported after the full Anatomy $\rightarrow$ Biochemistry $\rightarrow$ Dental evolution.}
\vspace{-10pt}
\label{tab:cl_medical}
\end{wraptable}

\textbf{Observation and Analysis.} Table~\ref{tab:cl_medical} and Figure~\ref{fig:model_evolve_curve} show that capability erosion in model self-evolution emerges as continual parametric overwrite under sequential self-training. Vanilla self-evolution exhibits continual degradation on earlier domains as later-stage adaptation proceeds. This effect is consistent across both backbone scales in Table~\ref{tab:cl_medical}. On the other hand, CPE consistently preserves higher retained performance than vanilla self-evolution across all evaluated domains. On Qwen3-0.6B, retained accuracy improves from 29.4\% to 30.5\% on Anatomy, from 37.2\% to 38.4\% on Biochemistry, and from 32.2\% to 33.0\% on Dental, with similar trends on Llama3.2-3B.

CPE mitigates this erosion by constraining destructive parameter movement through Fisher-based importance regularization, encouraging adaptation along lower-interference directions while preserving previously useful representations.

\subsection{CPE for Memory Evolution}
% \hwc{please summarize the memory results here. like one or two paragraphs}

Memory evolution also exhibits capability erosion: vanilla memory updates can evict, suppress, or compete with previously useful memories, degrading old-task retention. CPE mitigates this interference through evidence-gated preservation, protecting historically reliable memories while keeping low-evidence entries adaptable, reducing the average retention gap from 2.3\% to 0.7\%. Due to space constraints, we defer the full setup and detailed analysis to Appendix~\ref{app:memory_impl}.
\section{Related Work}
\textbf{Self-Evolving LLM Agents}. Recent work enables LLM agents to refine themselves across four dimensions: workflow optimization~\cite{hu2025automateddesignagenticsystems, zhang2025aflowautomatingagenticworkflow, wang2025evoagentxautomatedframeworkevolving, zhang2025evoflowevolvingdiverseagentic, liu2026sewselfevolvingagenticworkflows}, skill and tool accumulation~\citep{nguyen2025dynasaurlargelanguageagents, zhang2026memskilllearningevolvingmemory, wang2023voyageropenendedembodiedagent, zhao2024expelllmagentsexperiential, acikgoz2026toolr0selfevolvingllmagents, chen2026skillcraftllmagentslearn}, model self-training~\citep{zelikman2022starbootstrappingreasoningreasoning, zeng2025bstarmonitoringbalancingexploration, tian2024selfimprovementllmsimaginationsearching}, and memory evolution~\citep{suzgun2025dynamiccheatsheettesttimelearning,zeng2025bstarmonitoringbalancingexploration, wei2025evomemorybenchmarkingllmagent}.
\citet{gao2026surveyselfevolvingagentswhat, fang2025comprehensivesurveyselfevolvingai} survey these as complementary paths toward open-ended improvement. Critically, all existing frameworks measure self-evolution solely by forward progress on new tasks, without asking whether adaptation preserves competence on previously mastered ones. Our work is the first to systematically study this question, revealing capability degradation as a consistent, structural failure mode across all four dimensions.

% \textbf{Self-Evolving LLM Agents}
% A growing body of work enables LLM agents to autonomously refine themselves across four primary dimensions: workflow optimization~\cite{hu2025automateddesignagenticsystems, zhang2025aflowautomatingagenticworkflow, wang2025evoagentxautomatedframeworkevolving}, skill and tool accumulation~\citep{nguyen2025dynasaurlargelanguageagents, zhang2026memskilllearningevolvingmemory}, model self-training~\citep{zelikman2022starbootstrappingreasoningreasoning, zeng2025bstarmonitoringbalancingexploration}, and persistent memory evolution ~\citep{suzgun2025dynamiccheatsheettesttimelearning,zeng2025bstarmonitoringbalancingexploration}. ~\citet{gao2026surveyselfevolvingagentswhat} survey this landscape and frame these dimensions as complementary paths toward open-ended agent improvement. Critically, however, all existing frameworks evaluate self-evolution by measuring forward progress on newly encountered tasks. None ask whether adaptation to new tasks preserves competence on previously mastered ones. Our work is the first to systematically study this question, revealing that capability degradation is a consistent and structural failure mode across all four evolution dimensions.

\textbf{Continual Learning}. Catastrophic forgetting is well-studied in continual learning~\citep{wang2024comprehensivesurveycontinuallearning, marcus2025swebenchclcontinuallearning, fountas2025suresurprisedrivenprioritised, chen2024analyzingreducingcatastrophicforgetting, zhang2026mechanisticanalysiscatastrophicforgetting, luo2024empiricalstudycatastrophicforgetting}, where methods such as Elastic Weight Consolidation~\citep{Kirkpatrick_2017} penalize destructive parameter updates. However, this literature assumes the agent's only mutable component is its weights. Self-evolving agents break this assumption by simultaneously modifying workflows, skill repositories, memory stores, and model parameters, each capable of inducing degradation through distinct mechanisms.
Our work extends the forgetting problem to this multi-component setting, identifying dimension-specific erosion modes and targeted preservation strategies.

% \textbf{Continual Learning}.
% The risk of new updates overwriting previously learned behaviors is well-studied in the continual learning literature~\citep{ wang2024comprehensivesurveycontinuallearning}, where regularization-based methods such as Elastic Weight Consolidation~\citep{Kirkpatrick_2017} explicitly penalize destructive parameter movement. However, this line of work addresses forgetting exclusively at the model parameter level, under the assumption that the agent's only mutable component is its weights. Self-evolving agents break this assumption: they simultaneously modify workflows, skill repositories, memory stores, and model parameters, each of which can independently induce capability degradation through distinct interference mechanisms. Our work extends the forgetting problem to this broader multi-component setting, identifying dimension-specific erosion modes and deriving targeted preservation strategies for each.

\textbf{Risks in Self-Evolving Agents}. Recent work identifies safety risks under autonomous self-modification: \citet{shao2026agentmisevolveemergentrisks} document emergent safety misalignment, while \citet{zhou2026capabilityorientedtraininginducedalignment} link capability-oriented training to alignment drift. Our work shares this motivation but focuses on retained competence degradation, specifically whether agents lose previously mastered functional capabilities as they adapt to new task distributions, rather than safety-oriented degeneration.

% \textbf{Risks in Self-Evolving Agents}
% Recent studies have also begun to identify that autonomous self-improvement may produce safety degradation under autonomous self-modification. ~\citet{shao2026agentmisevolveemergentrisks} document emergent safety misalignment risks in self-evolving agents, while ~\citet{zhou2026capabilityorientedtraininginducedalignment} identify capability-oriented training as a source of safety alignment drift. Our work is related in motivation but differs substantially in focus. Rather than studying safety-oriented degeneration, we examine retained competence degradation: whether self-evolving agents lose previously mastered functional capabilities as they adapt toward new task distributions.
\section{Conclusion}
\label{sec:conclusion}
We identify capability erosion as a general failure mode of self-evolving LLM agents: adaptation to new task distributions can degrade previously acquired capabilities across workflow, skill, model, and memory evolution. We further show that this phenomenon arises from a shared interference mechanism underlying continual self-modification. To address this issue, we propose Capability-Preserving Evolution, a general principle for stabilizing long-horizon self-evolution by constraining destructive capability drift during adaptation. Our empirical findings show that CPE consistently improves retained capability stability and plasticity, highlighting the importance of preserving previously acquired capabilities during continual self-evolution.
% Across all four evolution dimensions, CPE consistently improves retained capability stability while preserving adaptation performance. 
% Our findings suggest that future self-evolving agents should optimize not only for acquiring new capabilities, but also for preserving previously learned ones throughout continual adaptation.
\newpage

\bibliographystyle{unsrtnat}
\bibliography{paper}
\clearpage
\appendix

\section{Proofs for the Local CPE Analysis}
\label{app:local-proofs}

\begin{proof}[Proof of Proposition~\ref{prop:local-erosion}]
By Taylor expansion of $L_{<t}$ around $R_{t-1}$, for a local update $\Delta$,
\[
    L_{<t}(R_{t-1}+\Delta)
    =
    L_{<t}(R_{t-1})
    +
    \nabla L_{<t}(R_{t-1})^\top \Delta
    +
    \frac{1}{2}\Delta^\top H_{<t}\Delta
    +
    o(\|\Delta\|^2).
\]
Since $R_{t-1}$ is a local minimizer of $L_{<t}$, we have
$\nabla L_{<t}(R_{t-1})=0$. Substituting
$\Delta=-\eta g_t$ gives
\[
    L_{<t}(R_t^{\mathrm{naive}})
    -
    L_{<t}(R_{t-1})
    =
    \frac{\eta^2}{2}g_t^\top H_{<t}g_t
    +
    o(\eta^2).
\]
If $g_t$ has nonzero projection onto a positive-curvature direction of
$H_{<t}$, then $g_t^\top H_{<t}g_t>0$, so the leading term is positive for
sufficiently small $\eta$. This proves the result.
\end{proof}

\begin{proof}[Proof of Proposition~\ref{prop:cpe-local-control}]
The first-order optimality condition for the local CPE objective is
\[
    g_t + H_t\Delta_\lambda+\lambda M_t\Delta_\lambda=0.
\]
Since $M_t\succ 0$, $H_t\succeq 0$, and $\lambda>0$, the matrix
$H_t+\lambda M_t$ is positive definite. Therefore,
\[
    \Delta_\lambda=-(H_t+\lambda M_t)^{-1}g_t.
\]

Let
\[
    A=M_t^{-1/2}H_tM_t^{-1/2}\succeq 0,
    \qquad
    \widetilde g=M_t^{-1/2}g_t .
\]
Then
\[
    \Delta_\lambda
    =
    -M_t^{-1/2}(A+\lambda I)^{-1}\widetilde g .
\]
Therefore,
\[
    \Delta_\lambda^\top M_t\Delta_\lambda
    =
    \widetilde g^\top (A+\lambda I)^{-2}\widetilde g .
\]
Since $A\succeq 0$, every eigenvalue of $(A+\lambda I)^{-2}$ is at most
$\lambda^{-2}$. Hence
\[
    \Delta_\lambda^\top M_t\Delta_\lambda
    \le
    \frac{1}{\lambda^2}\|\widetilde g\|_2^2
    =
    \frac{1}{\lambda^2}g_t^\top M_t^{-1}g_t
    =
    \frac{1}{\lambda^2}\|g_t\|_{M_t^{-1}}^2 .
\]
By Assumption~\ref{assump:cap-aligned}, $H_{<t}\preceq cM_t$, so
\[
    \frac{1}{2}\Delta_\lambda^\top H_{<t}\Delta_\lambda
    \le
    \frac{c}{2}\Delta_\lambda^\top M_t\Delta_\lambda
    \le
    \frac{c}{2\lambda^2}\|g_t\|_{M_t^{-1}}^2 .
\]
Finally, since $R_{t-1}$ is a local minimizer of $L_{<t}$, Taylor expansion gives
\[
    L_{<t}(R_{t-1}+\Delta_\lambda)-L_{<t}(R_{t-1})
    =
    \frac{1}{2}\Delta_\lambda^\top H_{<t}\Delta_\lambda
    +
    o(\|\Delta_\lambda\|^2).
\]
Combining the previous two displays proves the stated bound.

It remains only to justify that the remainder is lower order for large
$\lambda$. Since
\[
    \Delta_\lambda
    =
    -M_t^{-1/2}(A+\lambda I)^{-1}\widetilde g,
\]
and $\|(A+\lambda I)^{-1}\|_2\le \lambda^{-1}$, we have
$\|\Delta_\lambda\|=O(\lambda^{-1})$ when $M_t$ is fixed and positive definite.
Thus the Taylor remainder $o(\|\Delta_\lambda\|^2)$ is lower order than the
leading $O(\lambda^{-2})$ term.
\end{proof}

\section{CPE for Memory Evolution}
\begin{table}[htbp]
\vspace{-10pt}
\centering
\footnotesize
\setlength{\tabcolsep}{4pt}
\begin{tabular}{l|cc|cc}
\toprule
\multirow{2}{*}{\textbf{Domain}}
& \multicolumn{2}{c|}{\textbf{GPT-5 nano}}
& \multicolumn{2}{c}{\textbf{GPT-4o mini}} \\
\cmidrule(lr){2-3} \cmidrule(lr){4-5}
& \textbf{Van.}
& \textbf{Reg.}
& \textbf{Van.}
& \textbf{Reg.} \\
\midrule
Counting \& Probability & 85.0 & 86.0 & 74.5 & 78.0 \\
Prealgebra              & 82.5 & 86.0 & 88.5 & 88.5 \\
Number Theory           & 85.0 & 86.5 & 74.0 & 74.5 \\
\bottomrule
\end{tabular}
\vspace{2pt}
\caption{Memory self-evolution comparison between vanilla self-evolve and CPE-style evolution. Scores are reported after self-evolving from a shared simple-task memory on the complex-task subset.}
\label{tab:memory_selfevolve_compare}
%\vspace{-20pt}
\end{table}

In memory self-evolution, the mutable repository is a persistent external memory bank that continually accumulates reusable experiences to guide future inference. As the agent encounters new task streams, new memory entries may be added, existing memories may be revised, and low-evidence memories may be removed to improve future performance.

Under naïve memory evolution, however, continual updates can gradually interfere with previously useful memories.
As new task-specific memories accumulate, earlier memories become increasingly vulnerable to eviction, suppression, or retrieval competition, causing the memory bank to drift toward recently optimized task distributions.
Consequently, previously useful retrieval patterns may deteriorate even when the underlying model parameters remain fixed.

CPE mitigates this issue through evidence-gated memory preservation.
Instead of treating all memories as equally mutable, memory entries accumulate stability through repeated self-verified support over time.
Historically reliable memories are protected from destructive rewrite or eviction whenever possible, while lower-evidence entries remain adaptable to new task distributions.
This stabilizes long-horizon memory evolution while still permitting continual memory growth and adaptation.

\paragraph{Setup.}
We study memory self-evolution using a Dynamic-Cheatsheet-style external strategy repository~\citep{suzgun2025dynamiccheatsheettesttimelearning} with GPT-4o mini~\citep{openai2024gpt4technicalreport} and GPT-5 nano~\citep{singh2026openaigpt5card} as the underlying solver backbones.
The base model remains fixed throughout the experiment, and only the external memory bank is updated during self-evolution.

The memory repository stores reusable strategy notes accumulated from previous reasoning trajectories.
During sequential evolution, the agent continually updates this repository using self-verified successful experiences from newly encountered task streams.
The vanilla baseline directly performs unconstrained memory insertion, revision, and eviction during continual updates.

Our CPE variant preserves the same memory update pipeline, but introduces evidence-gated memory preservation.
Memories that accumulate repeated self-verified support are treated as increasingly stable and are protected from destructive rewrite or eviction whenever possible, while lower-evidence entries remain adaptable to new task distributions.

Following an old-task adaptation stage, we continue memory evolution on a new-task stream and evaluate retained performance on held-out old-task probes to measure memory-level interference.
Additional implementation details are deferred to Appendix~\ref{app:memory_impl}.

\begin{figure}[t]
    \centering
    \includegraphics[width=\linewidth]{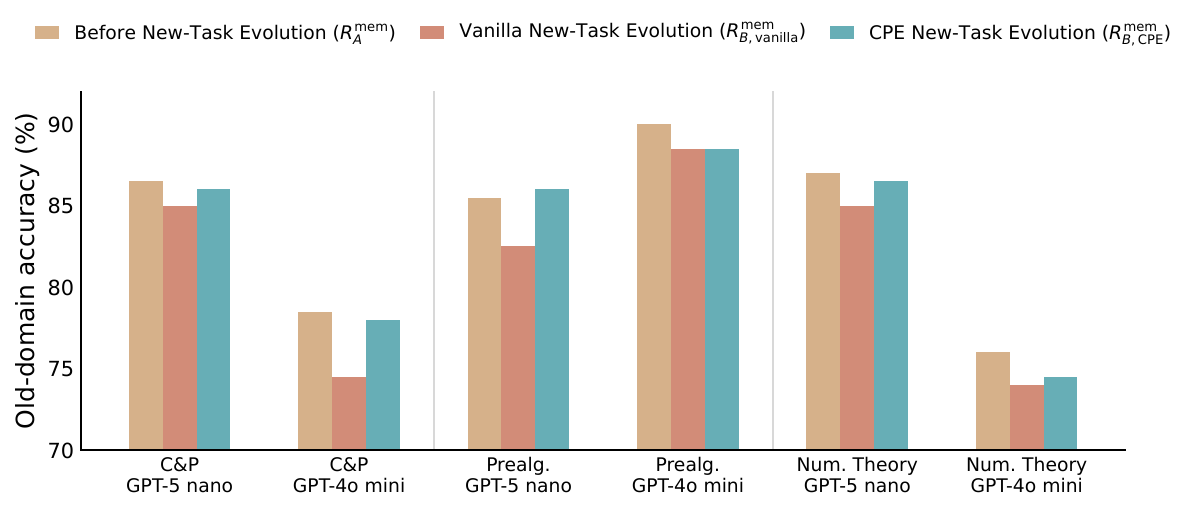}
    \caption{
    \textbf{Memory-level interference under sequential memory evolution.}
    We compare retained old-task performance before and after exposing the memory repository to new-task updates.
    Although the underlying model remains fixed, naïve memory evolution consistently degrades old-task retention after continual memory updates.
    CPE mitigates this interference by stabilizing historically reliable memories during long-horizon memory evolution.
    }
    \label{fig:memory_retention}
\end{figure}

\paragraph{Observation and Analysis.}
Figure~\ref{fig:memory_retention} compares retained old-domain probe accuracy across one pre-evolution memory state and two post-evolution memory states. The pre-evolution state is the memory bank obtained after old-domain adaptation. The first post-evolution state is produced by vanilla new-task memory evolution, where the repository is updated through unconstrained insertion, revision, and eviction. The second post-evolution state is produced by CPE-regularized memory evolution, which uses the same update pipeline but protects historically reliable memories through evidence-gated preservation. This comparison measures how continued memory updates on new tasks affect retained performance on previously learned domains.

Vanilla new-task memory evolution increases old-task interference. Across all six domain--backbone settings, average old-domain performance drops from 83.9\% before new-task evolution to 81.6\% after unconstrained memory updates. This indicates that repository-level memory modification can degrade previously acquired memory benefits even when the underlying model parameters remain fixed.

CPE substantially mitigates this degradation. After CPE-regularized memory evolution, average retained performance reaches 83.3\%, recovering most of the capability loss introduced by naïve updates. The effect is particularly strong on GPT-5 nano, where retained performance after evolution remains nearly identical to the pre-evolution memory baseline. GPT-4o mini shows a smaller but still consistent recovery trend.

These results suggest that persistent memory functions as a mutable capability repository. Under unconstrained evolution, newly introduced memories gradually interfere with previously useful retrieval patterns through eviction and context competition. CPE reduces this interference by preserving stable high-utility memories and preferentially compressing lower-evidence entries under memory-budget constraints.

\section{Workflow Evolve Implementation Details}
\label{app:workflow_impl}

\subsection{Simple and Complex Task Split}
\label{app:task_classification}

\begin{tcolorbox}[colback=white,colframe=blue!40!gray,title=\textbf{Tau2-Bench Task Classification Prompt}]
\textbf{Objective:}  
Classify each Tau2-Bench text task into exactly one label: \texttt{simple}, \texttt{complex}, or \texttt{drop}.

\vspace{0.5em}
\textbf{Rubric:}
\begin{itemize}
    \item \textbf{simple}: single main objective, limited branching, limited state tracking, little planning, and minimal tool use.
    
    \item \textbf{complex}: multiple dependent steps, meaningful planning, multi-stage reasoning, troubleshooting, topic shifts, verification, or substantial tool use/state tracking.
    
    \item \textbf{drop}: ambiguous or in-between cases that should not be forced into either simple or complex.
\end{itemize}

\vspace{0.5em}
\textbf{Output Format:}

Return strict JSON with the following schema:
\begin{verbatim}
{
  "decisions": [
    {
      "task_id": "...",
      "label": "simple|complex|drop",
      "reason": "short reason"
    }
  ]
}
\end{verbatim}

\vspace{0.5em}
\textbf{Input Fields:}
\begin{itemize}
    \item \texttt{Domain}: \texttt{\{domain\}}
    \item \texttt{Tasks}: serialized task summaries in JSON format.
\end{itemize}
\end{tcolorbox}

\begin{table*}[h]
\centering

% ===================== Row 1 =====================

\begin{minipage}[t]{0.45\linewidth}
\centering
\footnotesize
\setlength{\tabcolsep}{5pt}
\begin{tabular}{lccc}
\toprule
\textbf{Domain} & \textbf{Simple} & \textbf{Complex} & \textbf{Dropped} \\
\midrule
Airline & 14 & 29 & 7 \\
Retail  & 30 & 79 & 5 \\
Telecom & 24 & 71 & 19 \\
\bottomrule
\end{tabular}
\captionof{table}{Task split statistics across domains. Tasks labeled as ``Dropped'' were excluded because the classifier judged them ambiguous between simple and complex categories.}
\label{tab:task_split}
\end{minipage}
\hfill
\begin{minipage}[t]{0.45\linewidth}
\centering
\footnotesize
\setlength{\tabcolsep}{3pt}
\begin{tabular}{lccc}
\toprule
\textbf{Domain} & \textbf{Simple Mean} & \textbf{Complex Mean} & \textbf{p-value} \\
\midrule
Airline & 0.86 & 3.93 & 0.0003 \\
Retail  & 3.87 & 5.08 & 0.0119 \\
Telecom & 1.67 & 6.07 & $< 0.001$ \\
\midrule
\textbf{Pooled} & \textbf{2.47} & \textbf{5.28} & \textbf{$< 0.001$} \\
\bottomrule
\end{tabular}
\captionof{table}{Comparison of required tool-call actions (\texttt{num\_actions}) between simple and complex tasks. Complex tasks consistently require more tool calls across all domains.}
\label{tab:num_actions}
\end{minipage}

\vspace{0.8em}

% ===================== Row 2 =====================

\begin{minipage}[t]{0.45\linewidth}
\centering
\footnotesize
\setlength{\tabcolsep}{5pt}
\begin{tabular}{lcc}
\toprule
\textbf{Domain} & \textbf{Simple Mean} & \textbf{Complex Mean} \\
\midrule
Airline & 1.57 & 3.00 \\
Retail  & 0.37 & 0.61 \\
Telecom & 0.00 & 0.00 \\
\bottomrule
\end{tabular}
\captionof{table}{Comparison of the number of natural language assertions (\texttt{num\_nl\_assertions}) between simple and complex tasks. NL assertions are used by $\tau^2$-Bench to measure behavioural constraints that cannot be verified solely through database state.}
\label{tab:nl_assertions}
\end{minipage}
\hfill
\begin{minipage}[t]{0.45\linewidth}
\centering
\footnotesize
\setlength{\tabcolsep}{3pt}
\begin{tabular}{lccc}
\toprule
\textbf{Domain} & \textbf{Simple Mean} & \textbf{Complex Mean} & \textbf{p-value} \\
\midrule
Airline & 22.5 words & 28.6 words & 0.11 \\
Retail  & 36.6 words & 75.4 words & $< 0.001$ \\
Telecom & 18.0 words & 34.8 words & $< 0.001$ \\
\midrule
\textbf{Pooled} & \textbf{27.2 words} & \textbf{51.7 words} & \textbf{$< 0.001$} \\
\bottomrule
\end{tabular}
\captionof{table}{Comparison of user instruction length in the \texttt{reason\_for\_call} field between simple and complex tasks. Longer descriptions indicate richer user intent and more multi-stage objectives.}
\label{tab:reason_for_call}
\end{minipage}

\end{table*}

To construct a controlled distribution shift for workflow self-evolution, we partition $\tau^2$-Bench tasks into \textit{simple} and \textit{complex} subsets using GPT-5 nano as an LLM classifier. The full prompt is shown in ``Tau2-Bench Task Classification Prompt'' box above. The classifier follows a structured rubric based on workflow complexity, including branching, planning depth, state tracking, verification, and tool usage. Ambiguous cases are assigned a separate \texttt{drop} label instead of being forced into either category. The resulting split statistics are shown in Table~\ref{tab:task_split}.We validate the split using several independent structural signals from $\tau^2$-Bench. As shown in Table~\ref{tab:num_actions}, complex tasks require substantially more mandatory tool-call actions than simple tasks across all domains (5.28 vs.\ 2.47 on average, pooled $p < 0.001$ under a one-sided Mann--Whitney U test). Complex tasks also contain significantly longer user intent descriptions (51.7 vs.\ 27.2 words on average; Table~\ref{tab:reason_for_call}), indicating richer multi-stage objectives. In addition, complex tasks generally involve more natural-language behavioral constraints measured through NL assertions (Table~\ref{tab:nl_assertions}).Finally, across downstream workflow evolution experiments, agents consistently achieve higher success rates on the simple subset than on the complex subset, further confirming that the split captures genuine execution difficulty rather than superficial textual variation. Together, these results support the validity of the LLM-assisted simple/complex partition.

\subsection{Implementation Details}

\begin{table}[t]
\centering
\footnotesize
\setlength{\tabcolsep}{6pt}
\begin{tabular}{lc}
\toprule
\textbf{Hyperparameter} & \textbf{Value} \\
\midrule
Optimizer backbone & GPT-5.1 / GPT-5 nano \\
Executor backbone & GPT-5.1 / GPT-5 nano \\
User simulator & GPT-5.1 / GPT-5 nano \\
Optimizer temperature & 0.2 \\
Executor temperature & 0.2 \\
Maximum workflow rounds & 3 \\
Validation rounds & 1 \\
Evaluation rounds & 1 \\
Maximum tool steps & 40 \\
Max concurrency & 5 \\
Top prior rounds sampled & 3 \\
Complex-task train ratio & 0.2 \\
Complex-task eval ratio & 0.8 \\
\bottomrule
\end{tabular}
\caption{Workflow evolution hyperparameters.}
\label{tab:workflow_impl}
\end{table}

Following the continual self-evolution setting in the main paper, both vanilla and CPE workflows are initialized from the same simple-task seed workflow and subsequently evolved using the first 20\% of the complex-task subset. The remaining 80\% of complex tasks are reserved for held-out evaluation. To measure retained capability stability, we additionally evaluate the evolved workflows on the full simple-task subset. Evaluation uses the $\tau^2$-Bench execution environment with a maximum of 40 interaction steps per episode. During evaluation, the best workflow round is automatically selected according to the highest validation score recorded in \texttt{results.json}. The hyperparameters used in our experiments are summarized in Table~\ref{tab:workflow_impl}; all remaining settings follow the default EvoAgentX configuration.

\subsection{Additional Results}

\begin{figure}[h]
\centering
\begin{subfigure}{0.48\linewidth}
    \centering
    \includegraphics[width=\linewidth]{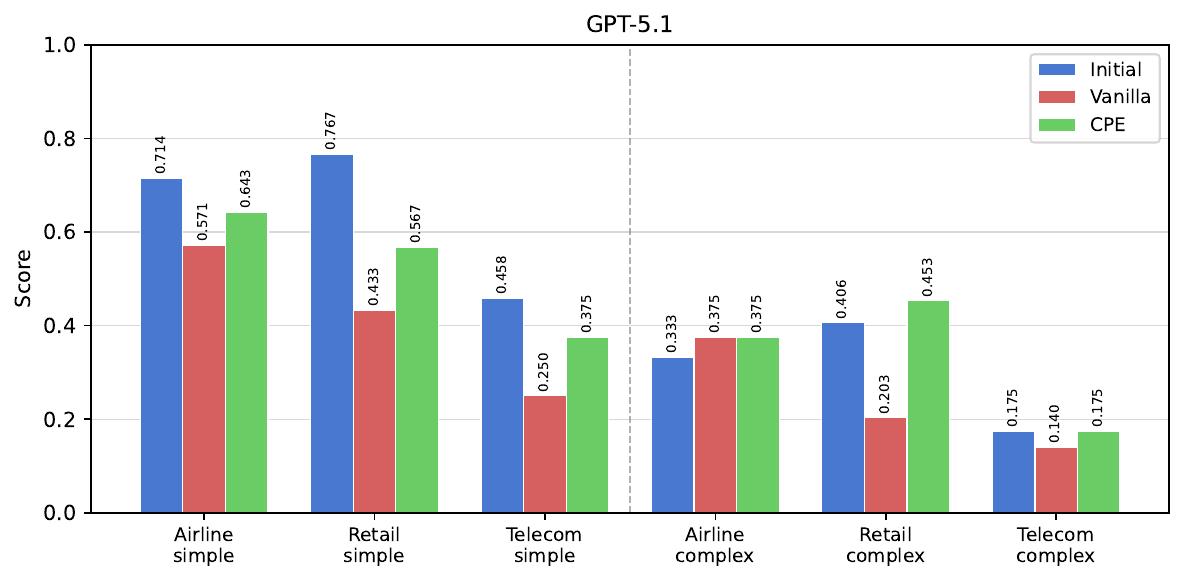}
    \caption{GPT 5.1}
\end{subfigure}
\hfill
\begin{subfigure}{0.48\linewidth}
    \centering
    \includegraphics[width=\linewidth]{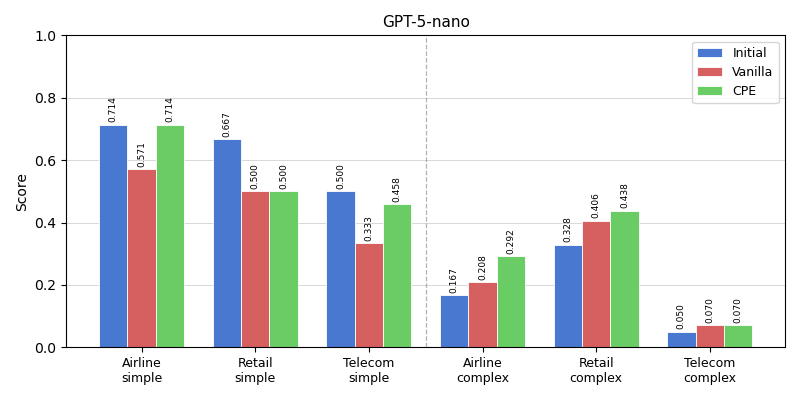}
    \caption{GPT-5 nano}
\end{subfigure}

\caption{Full workflow evolution results across backbones.}
\label{fig:full_workflow_evolve}
\vspace{-8pt}
\end{figure}

Figure~\ref{fig:full_workflow_evolve} presents the full workflow evolution trajectories across GPT-5.1 and GPT-5 nano backbones. Across both models, we observe a consistent pattern: unconstrained workflow self-evolution progressively drifts toward increasingly complex execution structures, which improves adaptation on harder tasks but degrades previously retained simple-task capability. In contrast, CPE substantially stabilizes this evolution process by constraining destructive workflow drift, leading to consistently stronger retention of simple-task performance while maintaining competitive or improved complex-task adaptation. These results further support our central claim that capability erosion emerges as a general property of long-horizon workflow self-evolution, and that capability-preserving regularization improves the stability--plasticity trade-off across different optimizer backbones.

\section{Skill/Tool Evolve Implementation Details}

\subsection{Implementation Details}
\label{app:tool_impl}
We study skill/tool self-evolution using a MemSkill~\citep{zhang2026memskilllearningevolvingmemory} style framework on the MATH benchmark~\citep{hendrycks2021measuringmathematicalproblemsolving}. The continual evolution trajectory follows three sequential domains:
\textit{Algebra} $\rightarrow$ \textit{Geometry} $\rightarrow$ \textit{Number Theory}. 
Each stage uses 500 training samples, and evaluation is performed on up to 1,000 held-out problems per domain. No model parameter updates are performed during evolution; adaptation occurs entirely through skill repository updates. 

The hyperparameters used in our experiments are summarized in Table~\ref{tab:tool_impl}; all remaining settings follow the default configuration. The skill repository is implemented as a bounded-capacity skill bank with a maximum size of 30 skills. Each skill contains a textual description, instruction template, update type, and usage statistics. Skill retrieval uses Contriever embeddings with cosine similarity, while the executor retrieves the top-5 relevant skills and top-5 memories during training. During evaluation, the memory retrieval budget is increased to top-10. The designer module is triggered every 50 chunks and can add at most 3 new skills per update cycle with 2 reflection iterations. 

\begin{table}[t]
\centering
\footnotesize
\setlength{\tabcolsep}{5pt}
\begin{tabular}{lc}
\toprule
\textbf{Hyperparameter} & \textbf{Value} \\
\midrule
Backbone LLM            & GPT-5 nano/GPT-4o mini \\
Skill bank capacity     & 30 \\
Retriever backbone      & Contriever \\
Skill retrieval top-$k$ & 5 \\
Memory retrieval top-$k$ & 5 \\
Eval memory top-$k$     & 10 \\
Chunk eval frequency    & 50 \\
Max new skills / cycle  & 3 \\
Reflection cycles       & 2 \\
Failure pool size       & 200 \\
QA workers              & 20 \\
\bottomrule
\end{tabular}
\caption{Implementation hyperparameters for skill/tool self-evolution experiments.}
\label{tab:tool_impl}
\end{table}

We evaluate both vanilla self-evolution and CPE under the same continual learning protocol. After training on each domain, the resulting skill repository is carried forward to the next domain without replay or parameter reset. Retained capability is evaluated by re-testing earlier domains after later-stage evolution. Final performance is reported after the full Algebra $\rightarrow$ Geometry $\rightarrow$ Number Theory evolution trajectory. Forgetting is measured as the degradation on earlier domains after adaptation to later domains. 

\subsection{Additional Results}
\begin{figure}[h]
\centering
\begin{subfigure}{0.48\linewidth}
    \centering
    \includegraphics[width=\linewidth]{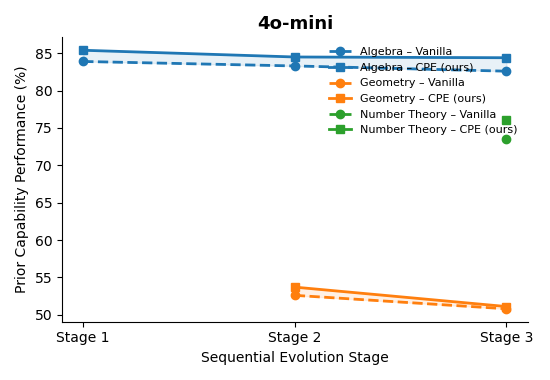}
    \caption{GPT-4o mini}
\end{subfigure}
\hfill
\begin{subfigure}{0.48\linewidth}
    \centering
    \includegraphics[width=\linewidth]{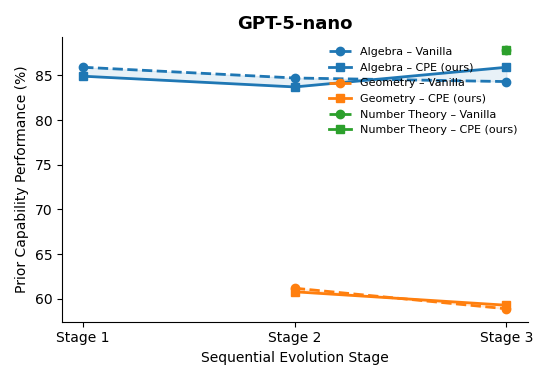}
    \caption{GPT-5 nano}
\end{subfigure}

\caption{Full skill/tool evolution results across backbones.}
\label{fig:full_tool_evolve}
\vspace{-8pt}
\end{figure}

Figure~\ref{fig:full_tool_evolve} presents the complete skill/tool evolution results across both GPT-4o mini and GPT-5 nano backbones. Across all evaluated domains, CPE consistently achieves stronger retained capability stability than vanilla self-evolution under the same sequential evolution trajectory and repository budget. The improvements are particularly visible in domains encountered earlier in the evolution process, where unconstrained repository updates progressively suppress previously useful procedural skills. These results further support the observation that capability erosion in skill evolution mainly arises from bounded-capacity repository overwrite, while capability-preserving consolidation helps maintain broader procedural coverage throughout long-horizon self-evolution.

\section{Model Evolution Implementation Details}

\subsection{Implementation Details}
\label{app:model_impl}
\begin{table}[t]
\centering
\small
\setlength{\tabcolsep}{6pt}
\begin{tabular}{lc}
\toprule
\textbf{Configuration} & \textbf{Value} \\
\midrule
Hardware & NVIDIA A100 \\
Base Models & Qwen3-0.6B/Llama3.2-3B \\
Dataset & MedMCQA \\
Quantization & 4-bit NF4 + bfloat16 \\
LoRA Rank ($r$) & 16 \\
LoRA Alpha ($\alpha$) & 32 \\
LoRA Dropout & 0.05 \\
Learning Rate & $2\times10^{-4}$ \\
Epochs per Stage & 1 \\
Per-device Batch Size & 4 \\
Gradient Accumulation & 4 \\
Effective Batch Size & 16 \\
Generation Temperature & 0.7 \\
Top-$p$ & 0.9 \\
Max New Tokens & 192 \\
Training Questions per Stage & 1,000 \\
Evolution Order & Anatomy $\rightarrow$ Biochemistry $\rightarrow$ Dental \\
EWC Regularization Strength ($\lambda$) & 10,000 \\
EWC Decay Factor ($\gamma$) & 0.9 \\
Fisher Estimation Samples & 100 \\
\bottomrule
\end{tabular}
\caption{Model self-evolution configuration used for all experiments.}
\label{tab:model_impl_details}
\end{table}

We implement model self-evolution using a STaR-style self-training pipeline with LoRA fine-tuning on MedMCQA. At each evolution stage, the model generates rationale-answer trajectories on the current medical domain, filters correct trajectories using strict answer matching, and performs supervised fine-tuning on the resulting demonstrations for one epoch. All experiments are conducted on NVIDIA A100 GPUs. 

Both vanilla self-evolution and CPE use identical optimization settings and evolution trajectories. All models are loaded with 4-bit NF4 quantization and bfloat16 computation. During generation, we use temperature $0.7$, top-$p$ $0.9$, and a maximum generation length of 192 tokens. Each stage uses up to 1,000 training questions sampled from the corresponding MedMCQA subject split. The hyperparameters used in our experiments are summarized in Table~\ref{tab:model_impl_details}; all remaining settings follow the default configuration.

For CPE, we apply an Elastic Weight Consolidation (EWC) regularizer on LoRA parameters. After each stage, parameter importance is estimated using a diagonal Fisher approximation computed from the current-stage training data. 

Following standard continual learning evaluation protocols, after each evolution stage we evaluate the updated model on the held-out test splits of all previously encountered domains to measure retained capability under continual adaptation. Final predictions are evaluated using strict exact-match scoring after answer normalization.

\subsection{Additional Results}

\begin{figure}[h]
\centering
\begin{subfigure}{0.48\linewidth}
    \centering
    \includegraphics[width=\linewidth]{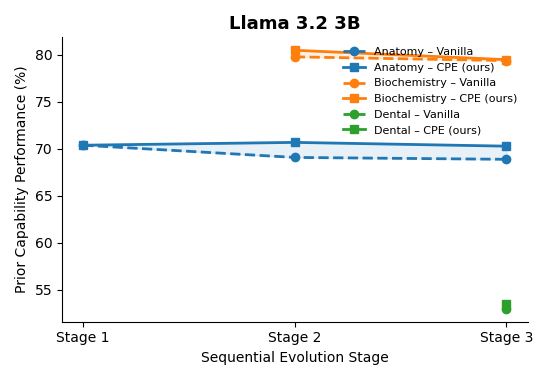}
    \caption{Llama}
\end{subfigure}
\hfill
\begin{subfigure}{0.48\linewidth}
    \centering
    \includegraphics[width=\linewidth]{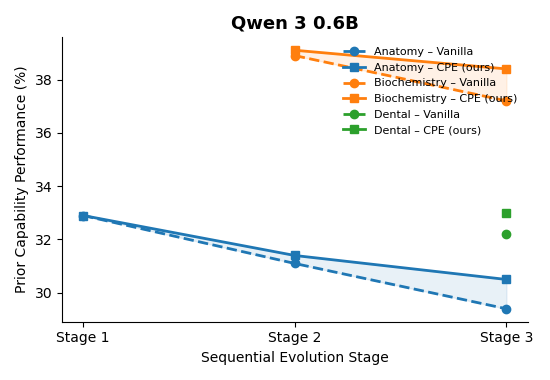}
    \caption{Qwen}
\end{subfigure}

\caption{Full Model evolution results across backbones.}
\label{fig:full_model_evolve}
\vspace{-8pt}
\end{figure}

Figure~\ref{fig:full_model_evolve} presents additional continual self-evolution trajectories for both Llama and Qwen backbones. Consistent with the main results, vanilla self-evolution exhibits progressive degradation on previously learned medical domains as later-stage adaptation proceeds, reflecting continual parametric overwrite under sequential self-training. In contrast, CPE consistently stabilizes retained performance throughout the evolution trajectory while preserving adaptation capability on newly introduced domains. These trends further support that capability erosion is a persistent phenomenon across backbone families rather than a model-specific artifact.

\section{Memory Evolve Implementation Details}
\label{app:memory_impl}

We instantiate memory self-evolution using the Dynamic Cheatsheet framework~\citep{suzgun2025dynamiccheatsheettesttimelearning} under a continual domain adaptation setting. 
The evolving memory repository is implemented as an external adaptive cheatsheet memory that is updated sequentially as the task distribution shifts across mathematical reasoning domains.

We use GPT-4o mini and GPT-5 nano as the underlying reasoning backbones.
For all experiments, the maximum generation length is set to 8192 tokens.
GPT-5 nano is configured with reasoning effort set to \texttt{medium} and text verbosity set to \texttt{low}. 
All experiments use random seed 42. The hyperparameters used in our experiments are summarized in Table~\ref{tab:memory_impl}; all remaining settings follow the default configuration. 

The continual evolution trajectory consists of sequential adaptation across mathematical domains.
The old-domain distributions contain \textit{Counting \& Probability}, \textit{Prealgebra}, and \textit{Number Theory}, while the new-domain distribution corresponds to \textit{Geometry}, which exhibits larger structural divergence from the previous domains.

For each evolution stage, both the old-domain and new-domain memory repositories are constructed using 30 training problems per domain.
The retained old-domain evaluation set contains 200 held-out problems.

Following the continual self-evolution setting described in the main paper, we evaluate memory evolution under both retained-capability and forward-adaptation criteria.
After sequential memory updates on the new-domain stream, we evaluate:
(i) retained performance on previously encountered domains using the preserved memory repository, and 
(ii) adaptation performance on the newly introduced domain.

% To isolate memory evolution effects, all other components including the backbone model, prompting strategy, optimizer configuration, and inference uses default settings to be identical between vanilla self-evolution and CPE-based evolution.
% The evaluation protocol follows the same retained-capability comparison framework used throughout the paper, where old-domain probe accuracy is measured before and after new-domain memory evolution.

\begin{table}[t]
\centering
\small
\setlength{\tabcolsep}{6pt}
\begin{tabular}{ll}
\toprule
\textbf{Component} & \textbf{Configuration} \\
\midrule
Backbone Models & GPT-5 nano/GPT-4o mini \\
Max Tokens & 8192 \\
GPT-5 Reasoning Effort & medium \\
GPT-5 Verbosity & low \\
Training Size & 30 problems per domain \\
Old-Domain Test Size & 200 problems \\
Sequential Domains & Count.\&Prob. $\rightarrow$ Prealg. \\
& $\rightarrow$ Num. Theory $\rightarrow$ Geometry \\
% Random Seed & 42 \\
\bottomrule
\end{tabular}
\caption{
Implementation details for memory self-evolution experiments.
}
\label{tab:memory_impl}
\end{table}

\section{Examples of Evolved Skills in Skill Self-Evolve}
\begin{tcolorbox}[colback=white,colframe=blue!40!gray,title=\textbf{triangulation\_area\_template}]
\textbf{Description:} Compute the area of a polygon by dividing it into triangles and summing their areas.

\vspace{0.5em}
\textbf{Instruction Template:}

Divide the polygon into triangles.  
Compute the area of each triangle using the given base and height.  
Sum the triangle areas to obtain the total polygon area.
\end{tcolorbox}

\begin{tcolorbox}[colback=white,colframe=blue!40!gray,title=\textbf{triangulation\_area\_template\_v3}]
\textbf{Description:} Robust triangulation workflow for complex polygons: perform a fan or ear-clipping triangulation from a fixed vertex, compute each triangle area, sum all sub-areas, optionally verify with the shoelace formula, and cross-check with total area constraints.

\vspace{0.5em}
\textbf{Instruction Template:}

(1) Choose a fixed vertex and triangulate the polygon into non-overlapping triangles.  
(2) For each triangle, compute its area via base-height or cross product.  
(3) Sum all triangle areas.  
(4) If coordinates are available, verify using the shoelace formula.  
(5) Check that the total is consistent with any given area constraints.  
(6) Return the final area.
\end{tcolorbox}

\begin{tcolorbox}[colback=white,colframe=blue!40!gray,title=\textbf{area\_union\_and\_overlap\_template\_for\_complex\_regions\_v2}]
\textbf{Description:} A robust workflow for computing areas defined as unions or intersections of multiple overlapping shapes (circles, sectors, polygons), with explicit partitioning and inclusion-exclusion reasoning.

\vspace{0.5em}
\textbf{Instruction Template:}

When a problem defines a region as a union or intersection of multiple shapes:

(a) Partition the plane into disjoint cells where region membership is constant, compute each cell area, and sum them.  
(b) Apply inclusion-exclusion to correct for overlaps where necessary.  
(c) Use symmetry to reduce repetitive work and avoid double counting.  
(d) Cross-check with an alternate decomposition such as shoelace or triangulation when coordinates are available.  
(e) For circle-based regions, explicitly decompose into sectors and segments using central angles.  
(f) Provide the final consolidated area, and if needed, surface a canonical fallback workflow with explicit validation constraints.
\end{tcolorbox}

\begin{tcolorbox}[colback=white,colframe=blue!40!gray,title=\textbf{problem\_type\_tagging\_and\_ranking}]
\textbf{Description:} Tag memories with problem-type categories and rank their relevance to the current problem in order to retrieve the most helpful strategies.

\vspace{0.5em}
\textbf{Instruction Template:}

(1) Annotate each memory with a set of problem-type tags such as quadratic, geometry, polynomial, inverse\_function, or geometric\_progression.  
(2) For the current problem, compute a relevance score using tag overlap, keyword overlap, and historical success weight.  
(3) Retrieve the top-$k$ memories with the highest relevance and present them with a brief one-line relevance justification.  
(4) If a memory is broadly useful but procedurally vague, include an applicability note clarifying the problem types it best addresses.  
(5) Prefer memories that provide concrete solving steps or a short actionable plan aligned with the current problem type.
\end{tcolorbox}

\section{Use of AI Assistants}
\label{app:llm}

AI assistants were used as auxiliary tools for manuscript preparation, including language polishing, clarity improvement, organization, and limited experimental workflows. 
All experimental design, methodological decisions, analyses, reported results, and final content were reviewed and verified by the authors.

\section{Limitations}
\label{app:limitations}
Although we propose Capability-Preserving Evolution (CPE) as a general principle for stabilizing self-evolving agents, the current work only explores relatively straightforward constraint-based instantiations within each evolution dimension. Our goal is primarily to verify the existence of capability erosion under self-evolution and demonstrate that explicitly regularizing destructive capability drift can substantially improve retained capability stability.

Accordingly, the preservation mechanisms used in this paper are intentionally lightweight and domain-specific, including anchor-based workflow regularization, repository consolidation, Fisher-style parameter constraints, and memory preservation heuristics. More sophisticated preservation strategies, adaptive regularization schedules, or jointly optimized multi-component preservation frameworks may further improve the stability--plasticity tradeoff in long-horizon self-evolving agents.

In addition, our experiments focus on controlled sequential task-distribution shifts and representative self-evolution frameworks. Real-world autonomous agents may encounter substantially more complex and open-ended evolution trajectories, including dynamic environments, multi-agent interaction, and continuously expanding capability spaces. Extending CPE to such settings remains an important direction for future work.

\section{Broader Impacts}
\label{app:broader_impacts}
This work studies capability degradation in self-evolving LLM agents and proposes Capability-Preserving Evolution (CPE) as a stabilization principle for continual adaptation. A potential positive impact of this work is improving the reliability and robustness of long-horizon autonomous agents by reducing destructive capability drift during self-evolution. More stable self-evolving systems may be beneficial for applications that require continual adaptation while maintaining previously acquired behaviors.

At the same time, improving the stability of self-evolving agents may also increase the robustness and persistence of autonomous systems in settings where misuse, over-automation, or unintended behaviors could create societal risks. In particular, more reliable autonomous adaptation mechanisms could be incorporated into large-scale agentic systems with limited human oversight. Our work does not introduce new deployment pipelines or application-specific autonomous capabilities, but the broader risks associated with increasingly capable self-evolving agents should still be considered.

Overall, we view capability preservation as an important component of safer continual adaptation, since uncontrolled capability drift can itself produce unreliable or unstable agent behavior during long-horizon self-evolution.

\end{document}